\def\BibTeX{{\rm B\kern-.05em{\sc i\kern-.025em b}\kern-.08em
		T\kern-.1667em\lower.7ex\hbox{E}\kern-.125emX}}
\newtheorem{theorem}{Theorem}
\newcommand{\eqword}{\fontsize{8pt}{18pt}\selectfont}
\title{Towards Open-World Feature Extrapolation: \\ An Inductive Graph Learning Approach}
\author{%
  Qitian Wu, Chenxiao Yang, Junchi Yan\thanks{Junchi Yan is the corresponding author.} \\
  Department of Computer Science and Engineering\\
  Shanghai Jiao Tong University\\
  \texttt{\{echo740, chr26195, yanjunchi\}@sjtu.edu.cn} \\
}
\begin{document}

\maketitle

\begin{abstract}
    We target open-world feature extrapolation problem where the feature space of input data goes through expansion and a model trained on partially observed features needs to handle new features in test data without further retraining. The problem is of much significance for dealing with features incrementally collected from different fields.
    To this end, we propose a new learning paradigm with graph representation and learning. Our framework contains two modules: 1) a backbone network (e.g., feedforward neural nets) as a lower model takes features as input and outputs predicted labels; 2) a graph neural network as an upper model learns to extrapolate embeddings for new features via message passing over a feature-data graph built from observed data. Based on our framework, we design two training strategies, a self-supervised approach and an inductive learning approach, to endow the model with extrapolation ability and alleviate feature-level over-fitting. We also provide theoretical analysis on the generalization error on test data with new features, which dissects the impact of training features and algorithms on generalization performance. Our experiments over several classification datasets and large-scale advertisement click prediction datasets demonstrate that our model can produce effective embeddings for unseen features and significantly outperforms baseline methods that adopt KNN and local aggregation. The implementation codes are public available at \url{https://github.com/qitianwu/FATE}.
\end{abstract}

\section{Introduction}
Learning a mapping from observation $\mathbf x$ (a vector of attribute features) to label $y$ is a fundamental and pervasive problem in ML community, with extensive applications spanning from classification/regression tasks for tabular data to advertisement click prediction \cite{feature-ctr, lr, wd, fm, fwl}, item recommendation \cite{feature-rec, feature-rec2, youtube, pinsage}, question answering \cite{qa2, qa1}, or AI more broadly. Existing approaches focus on a fixed input feature space shared by training and test data. Nevertheless, practical ML systems interact with a dynamic open-world where features are incrementally collected. For example, in recommender/advertisement systems, there often occur new user profile features unseen before for current prediction tasks. Also, with the advances of multi-modal recognition \cite{mm-4} and federated learning \cite{mm-2}, it is a requirement for a model trained with partial features to incorporate new features from other fields for decision-making on a target task.

A challenge stems from the fact that off-the-shelf neural network models cannot deal with new features without re-training on new data. As shown in Fig.~\ref{fig-intro}(a), a neural network builds a mapping from input features to a hidden representation through a weight matrix in the first layer. Given new features as input, the network needs to be augmented with new weight parameters and cannot map the new features to a desirable position in latent space if without re-training.

However, model re-training would be tricky and bring up several issues. First, re-training a model with both previous and new features would be highly time-consuming and cannot meet the requirement for online systems. Alternatively, one can re-train a trained model only on new features, which will induce risks for over-fitting new data or forgetting previous data.

Different from machines, fortunately, humans are often equipped with the ability for extrapolating to unseen features and distill the knowledge in new information for solving a target task without any re-training on new data. The inherent gap between existing ML approaches and human intelligence raises a research question: \emph{Can we design a ML model that is trained on one set of features and able to generalize to combine new unseen features for the same task without further training?} 


\begin{figure}[t]
    \centering
    \includegraphics[width=\textwidth,angle=0]{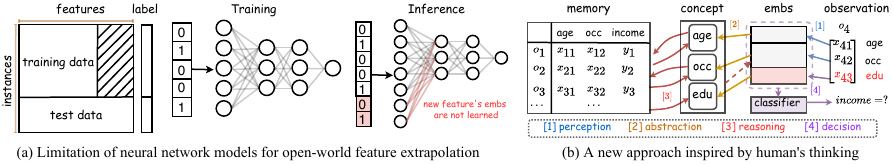}
    \vspace{-15pt}
    \caption{(a) The open-world feature extrapolation problem is defined over training data with partial features and test data with augmented feature space. Existing neural models cannot deal with new features without re-training. (b) Oue approach has an analogy to thinking process in human's brain. Imagine a person trained on using features \emph{age}, \emph{occ} to predict one's \emph{income} and tested on test cases with a new feature \emph{edu}. There are four processes from acquiring observation to giving final prediction.}
    \label{fig-intro}
    \vspace{-15pt}
\end{figure}

To find desirable solutions to this non-trivial question is challenging. Let us resort to how humans think and act in physical world scenarios. Imagine that we are well trained on predicting a person's income with one's age and occupation from historical records. Now we are asked to estimate the income of a new person with his/her age, occ. and education (a new feature). We can modularize the process in brain systems that distill and leverage the knowledge in new observations into four steps, as shown in Fig.~\ref{fig-intro}(b). 1) \emph{Perception}: the observations are recognized by our senses; 2) \emph{Abstraction:} the perceived information is aligned with concepts in our cognition; 3) \emph{Reasoning:} we search similar concepts and observations in our memory to understand and absorb the new knowledge; 4) \emph{Decision:} with new understanding and abstraction, we make final decisions for the prediction.

The above human's thinking process inspires the methodology in our paper where we propose a new learning paradigm for open-world feature extrapolation. Our proposed framework contains two modules: a backbone network, which can be a feedforward neural network, and a graph neural network. 1) The backbone network first maps input features to embeddings, which can be seen as \emph{perception} from observation. 2) We then treat observed data matrix (each row represents a feature vector for one instance) as a feature-data bipartite graph, which explicitly define proximity and locality structures among features and instances. Then a graph neural network is harnessed for neural message passing over adjacent features and instances in a latent space (\emph{abstraction}). 3) The GNN will inductively compute embeddings for new features based on those of existing ones, mimicking the \emph{reasoning} process from familiar concepts to new ones in our brain. 4) The newly obtained embeddings that capture both semantics and feature-level relations will be used to obtain hidden representations of new data with unseen features and make final \emph{decisions}.

To endow the model with ability for extrapolation to new features, we propose two training algorithms, one by self-supervised learning and one by inductive learning. The proposed model and its learning algorithm can easily scale to large-scale dataset (with millions of features and instances) via mini-batch training. We also provide a theoretical analysis on the generalization error on test data with new features, which allows us to dissect the impact of training features and algorithms on generalization performance. To verify our approach, we conduct extensive experiments on several real-world datasets, including six classification datasets from biology, engineering and social domains with diverse features, as well as two large-scale datasets for advertisement click prediction. Our model is trained on training data using partial features and tested on test data with a mixture of seen and unseen features.
The results demonstrate that 1) our approach consistently outperform models not using new features for inference; 2) our approach achieves averagely $29.8\%$ higher Accuracy than baseline methods using KNN, pooling or local aggregation for feature extrapolation; 3) our approach even exceeds models using incremental training on new features, yielding average $4.1\%$ higher Accuracy.

\textbf{Our contributions are:} 1) We formulate open-world feature extrapolation problem and show that it is feasible to extend neural models for extrapolating to new features without re-training; 2) We propose a new graph-learning model and two training algorithms for feature extrapolation problem; 3) Our theoretical analysis shows that the generalization error for data with new features relies on the number of training features and the randomness in training algorithms; 4) We conduct comprehensive experiments and show the effectiveness, applicality and scalability of proposed method.

\section{Methodology}

We focus on attribute features as input in this paper. An input instance is a vector $\mathbf r_i =[r_{im}]_{m=1}^d \in \mathbb R^d$ where each entry $r_{im}$ denotes a \emph{raw feature} (like age, occ., edu., etc.). If $r_{im}$ is a discrete/categorical raw feature with $R_m$ possible values, its space is an integer set $\{0, 1, \cdots, R_m-1\}$. If $r_{im'}$ is a continuous one, a common practice is to convert it into a discrete feature within space $\{0, 1, \cdots, R_{m'}-1\}$ by evenly division \cite{ffm} or log transformation \cite{feature-ctr}. We call $R_m$ as \emph{cardinality} for $m$-th raw feature.

An effective way to handle attribute features is via one-hot encoding \cite{wd,ffm,deepfm}. For $r_{im}$ with cardinality $R_m$ we convert it into a $R_m$-dimensional one-hot vector $\mathbf x_i^m$ where the unique 1 indexes the value.
In this way, one can convert an input $\mathbf r_i$ into a concatenation of one-hot vectors:
\begin{equation}\label{eqn-onehotvec}
    \mathbf x_i = \left[\mathbf x_i^{1}, \mathbf x_i^2, \cdots, \mathbf x_i^d\right], \;\mbox{where}\; 1\leq \forall m\leq d, \;\mathbf x_i^{m} \in \{0, 1\}^{R_m}\; \mbox{is a one-hot vector}.
\end{equation}
Use $x_{ij}$ to denote the $j$-th entry of $\mathbf x_i$ and we call each $x_{ij}$ as \emph{feature} in this paper. Assume $D=\sum_{m=1}^d R_m$ denotes the number of features and, as a reminder, $d$ is the number of raw features.

We next give a formal definition for open-world feature extrapolation problem in this paper: given training data $\{(\mathbf x_i, y_i)\}_{i\in I_{tr}}$ where $\mathbf x_i\in\mathcal X_{tr} = \{0, 1\}^D$, $y_i\in \mathcal Y$ and $I_{tr}$ is a set of indices, we aim to learn a model that can generalize to test data $\{\overline{\mathbf x}_{i'}, y_{i'}\}_{i'\in I_{te}}$ where $\overline{\mathbf x}_{i'}\in\mathcal X_{te} = \{0, 1\}^{\overline D}$, $y_{i'}\in\mathcal Y$ and $I_{te}$ is another set of indices. We term $\mathcal X_{tr}$ as \emph{training feature space} and $\mathcal X_{te}$ as \emph{test feature space}. We assume 1) the label space $\mathcal Y$ is shared by training and test data, and 2) $\mathcal X_{tr}\subset \mathcal X_{te}$, i.e., test feature space is an extension of training feature space. The feature space expansion stems from two possible causes: 1) there appear new raw features incrementally collected from other fields (i.e., $d$ increases) or 2) there appear new values out of the known space of existing raw features (i.e., $R_m$ increases).


\subsection{Proposed Model}\label{sec-model}

Our model contains three parts: 1) feature representation that builds a bipartite feature-data graph from input data; 2) a backbone network which is essentially a neural network model that predicts the labels when fed with input data; 3) a GNN model that inductively compute features' embeddings based on their proximity and local structures to achieve feature extrapolation.

\textbf{Feature Representation with Graphs.} We stack the feature vectors of all the training data as a matrix $\mathbf X_{tr} = [\mathbf x_{i}]_{i\in I_{tr}} \in \{0, 1\}^{N\times D}$ where $N=|I_{tr}|$. 
Then we treat each feature and instance as nodes and construct a bipartite graph between them. Formally, we define a node set $F_{tr}\cup I_{tr}$ where $F_{tr} = \{f_j\}_{j=1}^D$ with $f_j$ the $j$-th feature and $I_{tr} = \{o_i\}_{i=1}^N$ with $o_i$ the $i$-th instance in training set. The binary matrix $\mathbf X_{tr}$ constitutes an adjacency matrix where the non-zero entries indicate edges connecting two nodes in $F_{tr}$ and $I_{tr}$, respectively. The induced feature-data bipartite graph will play an important role in our extrapolation approach. The representation is flexible for variable-size feature set, enabling our model to handle test data $\overline{\mathbf x}_{i'}\in \{0,1\}^{\overline D}$ which gives $\mathbf X_{te} = [\overline{\mathbf x}_{i'}]_{i'\in I_{te}}$.

\textbf{Backbone Networks.} We next consider a prediction model $h_\theta(\cdot)$ as a backbone network that maps data features $\mathbf x_i$ to predicted label $\hat y_i$. Without loss of generality, a default choice for $h_\theta$ is a feedforward neural network. The first layer serves as an embedding layer which shrinks $\mathbf x_i$ into a $H$-dimensional hidden vector $\mathbf z_i = \mathbf x_i\mathbf W$ where $\mathbf W\in \mathbb R^{D\times H}$ denotes a weight matrix. The subsequent network (called \emph{classifier}) is often a stack of neural layers that predicts label $\hat y_i = \mbox{FFN}(\mathbf z_i; \phi)$. We use the notation $\hat y_i = h(\mathbf x_i; \phi, \mathbf W)$ to highlight two sets of parameters and $\theta = [\phi, \mathbf W]$.

Notice that the matrix multiplication in the embedding layer is equivalent to a two-step procedure: 1) a lookup of feature embeddings and 2) a permutation-invariant aggregation. More specifically, we consider $\mathbf W$ as a stack of weight vectors $\mathbf W = [\mathbf w_j]_{j=1}^D$ where $\mathbf w_j\in \mathbb R^{1\times H}$ corresponds to the embedding of feature $f_j$. The non-zero entries in $\mathbf x_i$ will index the corresponding rows of $\mathbf W$ and induce a set of embeddings $\{\mathbf z_i^m\}_{m=1}^d$ where $\mathbf z_i^m$ is the embedding given by $\mathbf x_i^m$ (i.e., one-hot vector of $m$-th raw feature). Then the hidden vector of $i$-th instance can be obtained by aggregation, i.e., $\mathbf z_i = \sum_{m=1}^d \mathbf z_i^m$ which is permutation-invariant w.r.t. the order of feature embeddings in $\{\mathbf z_i^m\}_{m=1}^d$. A more intuitive illustration is presented in Fig.~\ref{fig-backbone}.

The permutation-invariant property opens a way for handling variable-length feature vectors $\overline {\mathbf x}_{i'}$ \cite{deepset}. Essentially, on condition that we have embeddings for input features, we can add them up to get a fixed-dimensional hidden representation $\mathbf z_{i'}$ as input for the subsequent classifier. Therefore, the problem boils down to learning feature embeddings, especially \emph{how to extrapolate for embeddings of new features based on those of existing ones}. 

\emph{Remark.} Instead of using sum aggregation, some existing architectures consider concatenation of $d$ embedding vectors $\mathbf z_i^m$'s, which is essentially equivalent to sum aggregation (see Appendix~\ref{appx-eq} for more details). Therefore, the permutation-invariant property holds for widely adopted deep models.

\begin{figure}[t]
    \centering
    \includegraphics[width=\textwidth,angle=0]{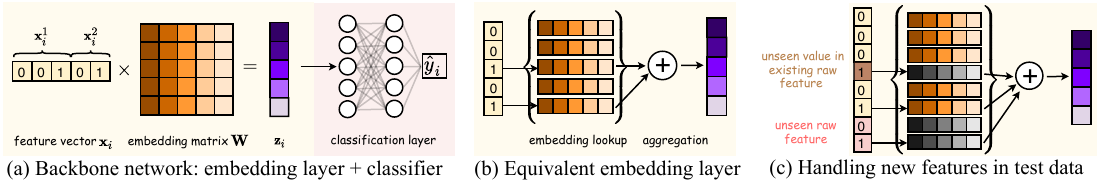}
    \caption{An illustration for the feasibility of extrapolation for new feature space with neural network models. (a) The backbone network is a feedforward neural network that is fed with input feature vector $\mathbf x_i$ and outputs predicted label $\hat y_i$. The first layer can be seen as an embedding layer where an embedding matrix is multiplied with the input vector to obtain an intermediate hidden representation $\mathbf z_i$. (b) The embedding layer can be equivalently replaced by an embedding lookup and an aggregation over indexed feature embeddings. The aggregation is permutation-invariant w.r.t. the order of input features. (c) The permutation-invariant property enables the model to handle variable-length input feature vector (new features may come from unseen values of existing raw features or unseen raw features). The problem boils down to learning new features' embeddings.}
    \label{fig-backbone}
\end{figure}

\textbf{GNN for Feature Extrapolation.} We proceed to propose a graph neural networks (GNN) model for embedding learning with the feature-data graph. Our key insight is that the bipartite graph explicitly embodies features' co-occurrence in observed instances, which reflects the proximity among features. Once we conduct message passing for feature embeddings over the graph structures, the embeddings of similar features can be leveraged to compute and update each feature's embedding. The model can learn to extrapolate for new features' embeddings using those of existing features with locality structures in a data-driven manner. The message passing over the defined graph representation is \emph{inductive} w.r.t. variable-sized feature nodes and instance nodes, which enables the model to tackle new feature space with distinct feature sizes and supports.

Specifically, we consider the embeddings $\mathbf w_j$ as an initial state $\mathbf w_j^{(0)}$ for node $f_j$ in $\mathcal F_{tr}$. The initial states of instance nodes are set as zero vectors with equal dimension as the feature nodes, i.e. $\mathbf s_i^{(0)} = \mathbf 0$. The interaction between two sets of nodes $\{\mathbf w_j\}_{j=1}^D$ and $\{\mathbf s_i\}_{i=1}^N$ can be modeled via graph neural networks where the node states in the $l$-th layer are updated by
\begin{equation}\label{eqn-gnn}
\begin{aligned}
    \mathbf s_i^{(l)} = \mathbf P^{(l)} \mbox{C\eqword{OMB}}\left (\mathbf s_i^{(l-1)}, \mbox{A\eqword{GG}}(\{\mathbf w_k^{(l-1)}\} | \forall k, x_{ik}=1) \right),\\
    \mathbf w_j^{(l)} = \mathbf P^{(l)} \mbox{C\eqword{OMB}}\left (\mathbf w_j^{(l-1)}, \mbox{A\eqword{GG}}(\{\mathbf s_k^{(l-1)}\} | \forall k, x_{jk}=1) \right),
\end{aligned}
\end{equation}
where $\mathbf P^{(l)} \in \mathbb R^{H\times H}$ is a weight matrix and we do not use non-linearity since it would degrade the performance empirically. For any new feature $f_{j'}$ in test data, we can set its initial state as a zero vector $\mathbf w_{j'}^{(0)}=\mathbf 0$. The GNN model outputs updated embeddings for feature nodes and we further use them as the feature embeddings in the backbone network. Fig.~\ref{fig-model} presents the feedforward computation of proposed model. Formally, with $L$-layer GNN, the GNN network $g$ gives updated feature embeddings $\hat {\mathbf W} = [\mathbf w_{j}^{(L)}]_{j=1}^D = g(\mathbf W, \mathbf X; \omega)$ and then the backbone network outputs prediction $\hat y_i = h(\mathbf x_i; \phi, \hat {\mathbf W})$ where $\mathbf X=\mathbf X_{tr}$ for training, $\mathbf X=\mathbf X_{te}$ for test and $\omega = \{\mathbf P^{(l)}\}_{l=1}^L$ denotes $g$'s parameters,.

\begin{figure}
    \centering
    \includegraphics[width=\textwidth,angle=0]{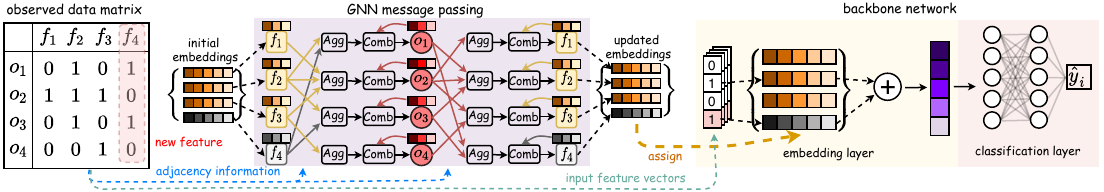}
    \vspace{-15pt}
    \caption{Feedforward of our model with input  $\{\mathbf x_i\}$: we build a feature-data graph between feature nodes $\{f_j\}$ and instance nodes $\{o_i\}$. A GNN is used to inductively compute features' embeddings via message passing and a backbone network uses the updated embeddings to predict labels $\{y_i\}$.  
    }
    \label{fig-model}
    \vspace{-10pt}
\end{figure}

\subsection{Model Learning}\label{sec-training}

We next discuss approaches for model training. 
In order to enable the model to extrapolate for new features, we put forward two useful strategies. 1) \emph{Proxy training data:} we only use partial features from training set as observed ones for each update. 2) \emph{Asynchronous updates:} we decouple the training of backbone network and GNN network and using different updating speeds for them in a nested manner (see Fig.~\ref{fig-training}). Based on these, we proceed to propose two specific training approaches.

\begin{figure}
    \centering
    \includegraphics[width=\textwidth,angle=0]{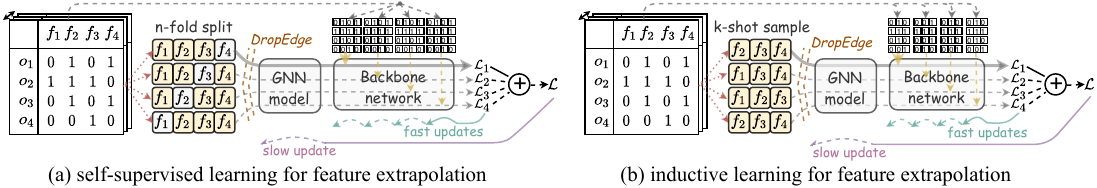}
    \vspace{-15pt}
    \caption{Illustration of two proposed training approaches: (a) self-supervised learning with n-fold splitting ($n=4$), and (b) inductive learning with k-shot sampling ($k=3$). The two methods differ in configuration of proxy data, and both consider asynchronous updating rule: the backbone network is updated with $n$-fold proxy data before the GNN network is updated once with an accumulated loss.
    }
    \label{fig-training}
    \vspace{-15pt}
\end{figure}

\textbf{Self-supervised Learning with N-fold Splits.} To mimic new features in the future, we can mask some observed features and let the model use the remaining features to estimate the embeddings of the masked ones. For a given feature set $F_{tr} = \{f_j\}_{j=1}^D$ of data $\{\mathbf x_i\}_{i\in I_{tr}}$, we consider an n-fold splitting method: in each iteration the features are first randomly shuffled and evenly divided into $n$ disjoint subsets, denoted by $\{\overline F_s\}_{s=1}^n$. We then consider asynchronous updating rule for two networks: each iteration contains $n$ times updates for backbone network and one update for GNN model. For the $s$-th update of the backbone, we mask features in $\overline F_s$ and set the initial states of masked features as zero vectors before fed into GNN. The GNN network will use the adjacency matrix $\mathbf X_{tr}$ to compute updated embeddings for the masked features. The embedding layer will be composed of updated embeddings of masked features and initial embeddings of the remaining features, based on which the backbone network outputs prediction $\hat y_i$ for each $\mathbf x_i$ and compute the loss function $\mathcal L_s = \frac{1}{N}\sum_{i\in I_{tr}} l(y_i, \hat y_i)$\footnote{Note that we still use supervised labels for loss though we call this approach as self-supervised learning.} (where $l(\cdot, \cdot)$ can be cross-entropy for classification). After $n$-step updates for backbone network, we use the accumulated loss $\mathcal L = \sum_{s}^n \mathcal L_s$ to update GNN model. The training procedure will repeat the above process until a given time budget. 

\textbf{Inductive Learning with K-shot Samples.} Alternatively, we can sample over the feature set and only expose partial features to the model for each update. For the $s$-th update, we randomly sample $k$ \emph{raw features}\footnote{One can also directly sample a certain ratio of features from $F_{tr}$, which might lead to large variance.} from input data which induces a new feature set $F_s\subset F_{tr}$ and extract the corresponding columns of $\mathbf X_{tr} = [\mathbf x_i]_{i\in I_{tr}}$ to form a proxy data matrix $\mathbf X_s\in \{0,1\}^{N\times |F_s|}$ (where each instance contains $|F_s|$ features). Then $\mathbf X_s$ is fed into GNN to obtain updated embeddings of features in $F_s$, based on which the backbone network outputs prediction for each instance using features in $F_s$. 

By contrast, the n-fold splitting contributes to better training stability since the model is updated on each feature in each iteration, while the inductive learning adds more randomness which can help to enhance model's generalization. We will further compare them in our experiments.

\textbf{DropEdge for Regularization.} In order to further alleviate over-fitting on training features, we use the DropEdge \cite{dropedge} to regularize our model. We consider a threshold $\rho$ and randomly set nonzero entries in $\mathbf X_{tr}$ (for self-supervised) or $\mathbf X_s$ (for inductive) as zero for each feedforward computation,
\begin{equation}
    \tilde{\mathbf X}_{tr} = \mbox{D\eqword{ROP}} \mbox{E\eqword{DGE}}(\mathbf X_{tr}, \rho) = \{x_{ij}|x_{ij}\in \mathbf X_{tr}, e_{ij}>\rho\}, \;\mbox{where}\; e_{ij}\sim \mathcal U(0,1).
\end{equation}

\textbf{Scaling to Large Systems.} To handle prohibitively large datasets for practical systems, we can divide data matrix into mini-batches along the instance dimension. Then, we feed each mini-batch into the model for once model training (including feature-level sampling/splitting, as shown in Fig.~\ref{fig-training}) or inference. 
Since the number of nonzero features for each instance is no more than $d$ (a relatively small value), the edge number in each mini-batch will be controlled within $O(Bd)$ (assume a mini-batch contains $B$ instances). Hence, the space cost can be effectively controlled using instance-level mini-batch partition. Yet, note that $B$ could not be arbitrarily small in order to guarantee sufficient message passing over diverse instances. We present the complete training algorithm in Appendix~\ref{appx-algo} where the model is trained end-to-end using self-supervised or inductive learning approaches.

\section{Generalization Analysis}\label{sec-gen}

In this section, we analyze the generalization error on test data with new features. We simplify the settings for analysis: 1) the backbone network is a two-layer FNN (an embedding layer $\mathbf W\in \mathbb R^{D\times H}$ plus a fully-connected layer $\Phi \in \mathbb R^{H\times 1}$) with sigmoid output; 2) the GNN network is a $L$-layer GCN which takes mean pooling aggregation over neighbored nodes without linear transformation and non-linearity in each layer; 3) the training algorithm is SGD. With above settings, the model can be written as $\hat y_i  = \sigma(\sum_{i'\in\mathcal N_{\tilde L}(i)\cup \{i\}} c_{ii'}^L\mathbf x_{i'} \mathbf W\Phi)$ where $\mathcal N_{\tilde L}(i)$ contains all the $\mathbf x_{i'}$'s that appear in the ${\tilde L}$-hop neighbors of $\mathbf x_i$ in the feature-data graph, $\tilde L = 2\cdot \lfloor \frac{L}{2}\rfloor$, and $c_{ii'}^L\in \mathbb R^+$ is a weight that quantifies influence of $\mathbf x_i$ on $\mathbf x_{i'}$ through $L$-layer mean-pooling graph convolution. More details for the derivation are in Appendix~\ref{appx-gen}. Also, we focus our analysis on the case of inductive learning and the results can be extended to self-supervised approach, which we leave for future work.

The data generation process can be described as follows. First, features $f_j$'s are sampled from an unknown distribution $\mathcal F$ and form a feature set $F_{tr} = \{f_j\}_{j=1}^D$. Then data $\{(\mathbf x_i, y_i)\}_{i\in I_{tr}}$ are sampled from a distribution $\mathcal D_{F_{tr}}$ whose support is over $\mathcal X_{F_{tr}}\times \mathcal Y$, and define $\mathbf X_{tr}=\{\mathbf x_i\}_{i\in I_{tr}}$ and $Y_{tr}=\{y_i\}_{i\in I_{tr}}$. Using $\psi = [\theta, \omega]$, the model can be denoted by $\hat Y = h(\mathbf X; \psi)$ (with a simplification from $\hat Y = \{\hat y_i \}_{i\in I} = \{ h(\mathbf x_i; \psi)\}_{i\in I}$) with loss function $\mathcal L(Y, \hat Y) = \frac{1}{|I|}\sum_{i\in I} l(y_i, \hat y_i)$. 

Recall that in training stage, we randomly partition the instances in $\mathbf X_{tr}$ into mini-batches with size $B$ and each mini-batch further samples $k$ raw features to form a feature subset $F_{s}$. In each update, the model is exposed to a $B\times |F_s|$ sub-matrix from $\mathbf X_{tr}$ as proxy training data and uses it for once feedforward and backward computation. With given training data $(\mathbf X_{tr}, Y_{tr})$, we define $\mathcal S$ as a set of all the proxy data sub-matrices that could be exposed to the model during the training
\[
    \mathcal S = \{(\mathbf X_{1}, Y_{1}), (\mathbf X_{2}, Y_{2}), \cdots, (\mathbf X_{m}, Y_{m}), \cdots, (\mathbf X_{M}, Y_{M})\}, \;\mbox{where}\; M\propto \mathcal O\left (\frac{d!}{(d-k)!k!} \right).
\]
The training process can be seen as a sequence of operations each of which samples a sub-matrix from $\mathcal S$ as proxy data in an i.i.d. manner and computes gradients for one SGD update (more discussions are in Appendix~\ref{appx-gen}).
Define $A_{\mathcal S}$ as a learning algorithm trained on $\mathcal S$, which gives a trained model $h(\mathbf X; \psi_{\mathcal S})$ simplified as $h_{\mathcal S}$. The generalization error $R(h_{\mathcal S})$ can be defined as
\begin{equation}
    R(h_{\mathcal S}) = \mathbb E_{(\mathbf X, Y)}\left[\mathcal L(Y, h(\mathbf X; \psi_{\mathcal S}))\right],
\end{equation}
where the expectation contains two stages of sampling: 1) a feature set $F=\{f_j\}$ is sampled according to $f_j\sim \mathcal F$, and 2) data $(\mathbf X, Y)$ is sampled according to $(\mathbf x_i, y_i)\sim \mathcal D_F$. The empirical risk that our approach optimizes with the training data would be
\begin{equation}
    R_{emp}(h_{\mathcal S}) = \frac{1}{M} \sum_{m=1}^M \mathcal L(Y_{m}, h(\mathbf X_{m}; \psi_{\mathcal S})).
\end{equation}
We study the expected generalization gap
\begin{equation}
    \mathbb E_{A}[R(h_{\mathcal S}) - R_{emp}(h_{\mathcal S})],
\end{equation}
where the expectation is taken over the randomness of $A_{\mathcal S}$ stemming from sampling for SGD updates.

We assume that the loss function $l(y_i, \hat y_i)$ is Lipschitz-continuous and smooth w.r.t. the model output $\hat y_i$. Concretely, we have 1) $|l(y, f(\cdot)) - l(y, f'(\cdot))| \leq \beta |f(\cdot) - f'(\cdot)|$ and 2) $|\nabla l(y, f(\cdot)) - \nabla l(y, f'(\cdot))| \leq \beta' |\nabla f(\cdot) - \nabla f'(\cdot)|$. Such condition can be satisfied by widely used loss functions such as cross-entropy and MSE. Then we have the following result (see Appendix~\ref{appx-gen} for proof).
\begin{theorem}\label{thm-gen-main}
    Assume the loss function is bounded by $l(y_i, \hat y_i)\leq \lambda$. For a learning algorithm trained on data $\{\mathbf X_{tr}, Y_{tr}\}$ with $T$ iterations of SGD updates, with probability at least $1-\delta$, we have
    \begin{equation}
        \mathbb E_{A}[R(h_{\mathcal S}) - R_{emp}(h_{\mathcal S})] \leq \mathcal O(\frac{d^T}{M}) + \left (\mathcal O(\frac{d^T}{M^{2}}) + \lambda \right) \sqrt{\frac{\log(1/\delta)}{2M}}.
    \end{equation}
\end{theorem}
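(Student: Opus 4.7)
My plan is to combine uniform stability of SGD with a concentration inequality that upgrades the expected generalization gap into a high-probability bound, in the spirit of Bousquet--Elisseeff applied to the $M$ i.i.d. proxy sub-matrices in $\mathcal S$.

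Stage 1: Uniform stability of $A_{\mathcal S}$. I would let $\mathcal S$ and $\mathcal S'$ be two proxy pools differing in a single sub-matrix, run SGD on each for $T$ iterations, and inductively bound the parameter perturbation $\|\psi^{(t)}_{\mathcal S} - \psi^{(t)}_{\mathcal S'}\|$. At each step the sampled index coincides in both runs with probability $(M-1)/M$, so the gradient mismatch is governed by the smoothness parameter $\beta'$ of $l(y,h(\cdot;\psi))$; with probability $1/M$ the differing element is drawn, contributing an additive term controlled by $\beta$. This yields a one-step recursion whose solution grows exponentially in $T$ but carries an explicit $1/M$ factor from the sampling probability.

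Stage 2: Controlling per-step constants by $d$. Because the backbone consists of a linear embedding $\mathbf W$ and a linear classifier $\Phi$, and the $L$-layer GCN uses only mean-pooling without nonlinearities, the sole nonlinearity is the sigmoid output. Each instance $\mathbf x_i$ has exactly $d$ nonzero entries, so a single SGD update perturbs only $\mathcal O(d)$ rows of $\mathbf W$ and propagates through a bounded neighborhood of the bipartite graph under the $L$-layer mean-pooling aggregation. Tracking these factors through the composition of GNN, backbone, and $T$ SGD steps yields a per-step Lipschitz expansion of order $d$ and an accumulated factor $d^T$. Combined with the Lipschitz constant $\beta$ of $l$, this produces a uniform stability coefficient whose numerator scales as $d^T$ and whose denominator carries a polynomial power of $M$.

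Stage 3: Concentration to high probability. With $\mathcal S$ consisting of $M$ i.i.d. proxy sub-matrices, I would invoke the Bousquet--Elisseeff high-probability bound for uniformly stable learners, which has the generic form
\begin{equation*}
R(h_{\mathcal S}) - R_{emp}(h_{\mathcal S}) \;\leq\; 2\,\epsilon_{stab} \;+\; \bigl(4 M\, \epsilon_{stab} + \lambda \bigr)\sqrt{\frac{\log(1/\delta)}{2M}},
\end{equation*}
which holds with probability at least $1-\delta$ over the sampling of $\mathcal S$. Substituting the stability constant from Stage~2 collects the two terms $\mathcal O(d^T/M)$ and $(\mathcal O(d^T/M^2) + \lambda)\sqrt{\log(1/\delta)/(2M)}$ asserted in Theorem~\ref{thm-gen-main}. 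The expectation $\mathbb E_A$ on the left-hand side is inserted either by noting that the stability recursion in Stage~1 was already taken in expectation over the per-step SGD index draw, or by simply observing that the Bousquet--Elisseeff inequality holds pointwise in the SGD randomness, so taking $\mathbb E_A$ of both sides only tightens the left side.

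The main obstacle is Stage~2: showing that the per-step expansion is polynomial in $d$ rather than in the much larger $D$ or $N$. A naive Lipschitz analysis of a GNN on a bipartite graph with $N + D$ nodes would pay unacceptable factors of $D$ or $N$; obtaining a $d$-only dependence requires systematically exploiting (i) the one-hot sparsity of $\mathbf x_i$ (at most $d$ nonzero coordinates), (ii) the mean-pooling aggregation which rescales by the inverse degree, and (iii) the fact that perturbing one proxy sub-matrix only affects a bounded local neighborhood of the bipartite graph. All subsequent algebra in Stages~1 and~3 is routine once the per-step constants are pinned down.
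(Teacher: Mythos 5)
Your proposal follows essentially the same route as the paper: a Bousquet--Elisseeff-style uniform-stability theorem over the $M$ i.i.d.\ proxy sub-matrices proved via McDiarmid's inequality (yielding exactly the $2\gamma + (4M\gamma+\lambda)\sqrt{\log(1/\delta)/(2M)}$ form you quote), combined with a Hardt--Recht--Singer-type SGD perturbation recursion with the same $(M-1)/M$ versus $1/M$ case split, giving $\gamma \leq \frac{2\beta\beta' d}{M}\sum_{t=1}^T(1+\beta' d)^{t-1} = \mathcal O(d^T/M)$. The obstacle you flag in Stage~2 is resolved in the paper exactly as you anticipate: the linear GNN and backbone collapse into $\hat y_i = \sigma\left(\sum_{i'} c^L_{ii'}\mathbf x_{i'}\mathbf W\Phi\right)$, and $d$-sparsity of each instance together with the $\ell_1$-preservation of mean pooling gives $\left\|\sum_{i'} c^L_{ii'}\mathbf x_{i'}\right\|_2 \leq \sqrt d$, so the per-step expansion factor is $(1+\beta' d)$, independent of $D$ and $N$.
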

The generalization gap depends on the number of raw features in training data and the size of $\mathcal S$. The latter is determined by configuration of proxy training data, particular, sampling over training features. If the sampling introduces more randomness (e.g. $k\approx d/2$), $\mathcal S$ would become larger, contributing to tighter gap. However, a large $\mathcal S$ would also lead to large variance in training and amplify optimization error. Therefore, there exists a trade-off w.r.t. how to sample/split observed features in training stage. Furthermore, the generalization gap also depends on $d$, and a larger $d$ would result in looser bound (since one often has $T>d$). This is because a larger $d$ would require to deal with more features. As the network becomes \emph{wider} and complex, it would be more prone for over-fitting.

\section{Experiments}\label{sec-experiment}

We apply our model FATE (for \underline{F}e\underline{AT}ure \underline{E}xtrapolation Networks) on real-world datasets. First, we consider six classification datasets from UCI Machine Learning Repository \cite{uci-dataset}: Gene, Protein, Robot, Drive, Calls and Github, as collected from domains like biology, engineering and social networks. The feature numbers are ranged from 219 to 4006 and the instance numbers vary from 1080 to 58509. We consider two large-scale datasets Avazu and Criteo from real-world online advertisement system whose goal is to predict the Click-Through Rate (CTR) of exposed advertisement to users. The two datasets have $\sim40$ million clicking/non-clicking records as instances and $\sim2$ million features. More dataset information and implementation details are in Appendix~\ref{appx-dataset} and \ref{appx-implement}, respectively.

\vspace{-2pt}
\subsection{Experiment on UCI Datasets} 

\begin{figure}[tb!]
    \centering
    \includegraphics[width=0.95\textwidth,angle=0]{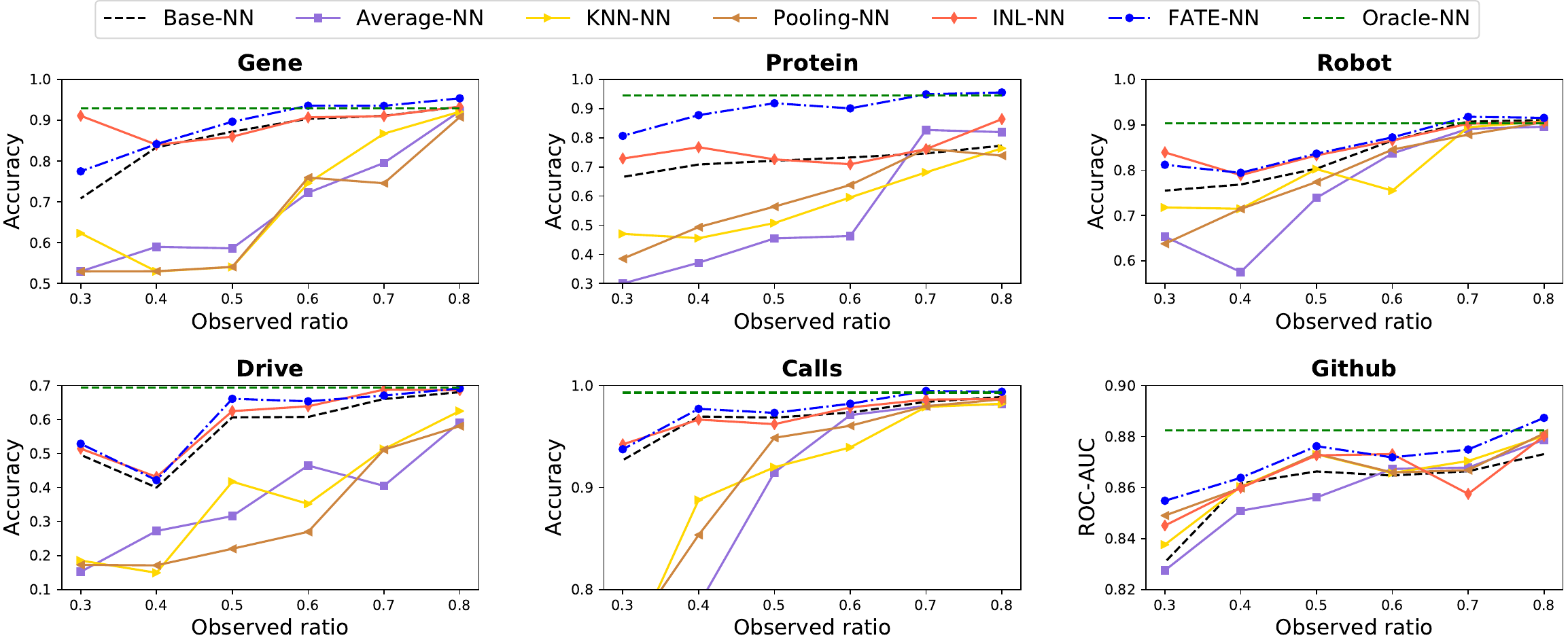}
   \vspace{-10pt}
    \caption{Accuracy/ROC-AUC results for UCI datasets with $30\%\sim80\%$ observed features for training. We run each experiment five times with different random seeds and report averaged scores.}
    \label{fig-uci-result}
    \vspace{-15pt}
\end{figure}

\textbf{Setup.} We randomly split all the instances into training/validation/test data with the ratio 6:2:2. Then we randomly select a certain ratio ($30\%\sim80\%$) of features as \emph{observed} ones and use the remaining as \emph{unobserved} ones. The model is trained with the \emph{observed} features of training instances and tested with \emph{all} the features of testing data. We adopt Accuracy as metric for datasets with more than two classes (Gene, Protein, Robot, Drive and Calls) and ROC-AUC for Github with two classes.

\textbf{Implementation.} We specify FATE in the following ways. 1) Backbone: a 3-layer feedforward NN. 2) GNN: a 4-layer GCN. 3) Training: self-supervised learning with n-fold splits. Several baselines are considered for comparison and their architectures are all specified as a 3-layer feedforward NN. First, \emph{Base-NN}, \emph{Average-NN}, \emph{Pooling-NN} and \emph{KNN-NN} are all trained on training instances with observed features. Then \emph{Base-NN} only uses test instances' observed features for inference. \emph{Average-NN} uses averaged embeddings of observed features as those of unobserved ones. \emph{Pooling-NN} (resp. \emph{KNN-NN}) computes embeddings for unobserved features via replacing our GNN with mean pooling aggregation over neighborhoods (resp. KNN aggregation over all the observed ones). Furthermore, we consider \emph{Oracle-NN} using all the features of training data for training and \emph{INL-NN} that is first trained on training data with observed features and then re-trained on training data with the remaining features.


\textbf{Results and Discussions.} Fig.~\ref{fig-uci-result} reports the mean Accuracy/ROC-AUC of five trials with different ratios of observed features ranging from 0.3 to 0.8. 
FATE achieves averagely $7.3\%$ higher Accuracy and $1.3\%$ higher ROC-AUC over Base-NN which uses partial features for inference. The improvements are statistically significant under $95\%$ confidence level. The results show that FATE can learn effective embeddings for new features that contribute to better performance for classification. Furthermore, FATE achieves averagely $29.8\%$ higher Accuracy and $0.5\%$ higher ROC-AUC over baselines Average-NN, KNN-NN and Pooling-NN. These baseline methods perform worse than Base-NN especially when observed features are few, which suggests that directly aggregating embeddings of observed features for extrapolation would degrade the performance. By contrast, FATE possesses superior capability for extrapolating to new unseen features. Even with $30\%$ observed features the model is able to distill the useful knowledge from $70\%$ unobserved features without re-training, providing decent classification performance. Notably, compared with INL-NN, FATE even achieves higher accuracy in most cases with a $4.1\%$ Accuracy improvement on average. The possible reason is that INL-NN is prone for over-fitting on new data and forgetting the previous one. Finally, FATE achieves very close performance to Oracle-NN when using sufficient observed features and can even slightly exceeds it with $80\%$ features in Gene, Robot and Github. In fact, the GNN network in FATE can not only achieve feature extrapolation, but also capture feature-level relations, which could be another merit of our method.

\subsection{Experiment on CTR Prediction}

\begin{table}[t!]
\small
\centering
\caption{ROC-AUC results for 8 test splits (T1-T8) on Avazu and Criteo datasets.}\vspace{-5pt}
\label{tbl-large-result}
\scalebox{0.83}{
\begin{tabular}{@{}c|c|c|cccccccccc@{}}
\toprule
Dataset & Backbone & Model & T1 & T2 & T3 & T4 & T5 & T6 & T7 & T8 & Overall \\ 
\midrule
\multirow{6}{*}{Avazu} & \multirow{3}{*}{NN} & Base & 0.666 & 0.680 & 0.691 & 0.694 & 0.699 & 0.703 & 0.705 & 0.705 & 0.693 $\pm$ 0.012  \\
& & Pooling & 0.655 & 0.671  & 0.683 & 0.683 & 0.689 & 0.694 & 0.697 & 0.697& 0.684 $\pm$ 0.011  \\
& & \textbf{FATE} & \textbf{0.689} & \textbf{0.699} & \textbf{0.708} & \textbf{0.710} & \textbf{0.715} & \textbf{0.720} & \textbf{0.721} & \textbf{0.721} & \textbf{0.710} $\pm$ 0.010
\\
\specialrule{0em}{1pt}{1pt}
\cline{2-12}
\specialrule{0em}{1pt}{1pt}
& \multirow{3}{*}{DeepFM} & Base & 0.675 & 0.684 & 0.694 & 0.697 & 0.699 & 0.706 & 0.708 & 0.706 & 0.697 $\pm$ 0.009  \\
& & Pooling & 0.666 & 0.676 & 0.685 & 0.685 & 0.688 & 0.693  & 0.694 & 0.694 & 0.685 $\pm$ 0.009  \\
&  & \textbf{FATE} & \textbf{0.692} & \textbf{0.702} & \textbf{0.711} & \textbf{0.714} & \textbf{0.718} & \textbf{0.722} & \textbf{0.724} & \textbf{0.724} & \textbf{0.713} $\pm$ 0.010 \\
\midrule
\multirow{6}{*}{Criteo} & \multirow{3}{*}{NN} & Base & 0.761 & 0.761  & 0.763  & 0.763 & 0.765 & 0.766 & 0.766 & 0.766 &  0.764 $\pm$ 0.002 \\
& & Pooling & 0.761 & 0.762 & 0.764 & 0.763 & 0.766 & 0.767 & 0.768 & 0.768 &  0.765 $\pm$ 0.001  \\
& &  \textbf{FATE} & \textbf{0.770} & \textbf{0.769} & \textbf{0.771} & \textbf{0.772}  & \textbf{0.773} & \textbf{0.774} & \textbf{0.774} & \textbf{0.774} & \textbf{0.772} $\pm$ 0.001 \\
\specialrule{0em}{1pt}{1pt}
\cline{2-12}
\specialrule{0em}{1pt}{1pt}
& \multirow{3}{*}{DeepFM} & Base & 0.772  & 0.771  & 0.772 & 0.772 & 0.774 & 0.774 & 0.774 & 0.774 &  0.773 $\pm$ 0.001
  \\
& & Pooling & 0.772 & 0.772 & 0.773 & 0.774 & 0.776  & 0.776 & 0.776 & 0.776 &  0.774 $\pm$ 0.002  \\
& & \textbf{FATE} & \textbf{0.781} & \textbf{0.780} & \textbf{0.782} & \textbf{0.782} & \textbf{0.784} & \textbf{0.784} & \textbf{0.784} & \textbf{0.784} & \textbf{0.783} $\pm$ 0.001
  \\
\bottomrule
\end{tabular}}
\vspace{-15pt}
\end{table}

\textbf{Setup.} We split the dataset in chronological order to simulate real-world cases. For Avazu dataset which contains time information in ten days, we use the data of first/second day for training/validation and the data of the third to tenth days for test. For Criteo dataset whose records are given in temporal order, we split the dataset into ten continual subsets with equal size and use the first/second subset for training/validation and the third to tenth subsets for test. With such data splitting, we can naturally obtain validation/test data with a mixture of seen and unseen features in the training data (the new features come from new values out of the known range of raw features). For Avazu/Criteo, there are $\sim$0.6/$\sim$1.3 million features in training data, $\sim$0.2/$\sim$0.4 million new features in validation data and totally $\sim$1.1/$\sim$0.8 million new features in all the test splits. We use ROC-AUC as evaluation metric.

\textbf{Implementation Details.} We consider two specifications for our backbone: 1) a 3-layer feedfoward NN, and 2) DeepFM \cite{deepfm}, a widely used model for advertisement click prediction considering inter-feature interactions over NN. The GNN is a 2-layer GraphSAGE model. The training method is inductive learning with k-shot samples and mini-batch partition. Also, we compare with baselines \emph{Base} and \emph{Pooling}. The \emph{KNN} method would suffer from scalability issue in the two large datasets.

\textbf{Results and Discussions.} Table~\ref{tbl-large-result} reports the ROC-AUC results for different test splits, which show that FATE significantly outperforms Base and Pooling in all the test splits using NN and DeepFM as backnones. Overall, compared with Base, FATE improves the ROC-AUC by 0.017/0.016 (resp. 0.008/0.01) with NN/DeepFM as the backbone on Avazu (resp. Criteo). Note that even an improvement of 0.004 for ROC-AUC is considered significant in click prediction tasks \cite{wd,deepfm}. The results show that FATE can combine useful information in new features collected in the future for the target task without further training and has promising power for enhancing the real-world systems interacting with open world. Compared with Pooling, FATE achieves significant AUC improvements on two datasets. The reason is that directly using average pooling to replace the GNN convolution would lead to limited capacity and weakens its ability for effective concept abstraction and reasoning.

\subsection{Further Discussions}

\textbf{Scalability.} We study model's scalability w.r.t. different batch sizes $B$ and number of features $D$ in Fig.~\ref{fig-scale-B} and \ref{fig-scale-D} which show that our training and inference time/space scale linearly on Criteo dataset.

\textbf{Ablation Studies.} Table~\ref{tbl-abl-uci} also provides ablation studies, which show that 1) the DropEdge operation can help regularize the training and bring up higher test accuracy; 2) using asynchronous updates for two networks leads to performance gain over joint training; 3) the n-fold splitting and k-shot sampling will outperform each other in different cases and both exceeds leave-one-out partition. Table~\ref{tbl-abl-large} compares using different sampling size $k$ for inductive learning approach on Avazu and Criteo, which verify our theoretical analysis in Section~\ref{sec-gen}. See more discussions in Appendix~\ref{appx-result}.

\textbf{Visualization.} Fig.~\ref{fig-vis} visualizes the produced feature embeddings by FATE-NN and Oracle-NN. It unveils two interesting insights that can interpret why FATE can sometimes outperform Oracle that uses all the features for training. First, FATE-NN's produced embeddings for observed and unobserved features possess more dissimilar distributions in latent space, compared to Oracle-NN. Notice that features' embeddings are used for the backbone network to compute intermediate hidden representation for each instance. Such phenomenon implies that FATE manages to extract more informative knowledge from new features. Second, the embeddings of FATE-NN form some particular structures (clusters, lines or curves) rather than uniformly distribute over the 2-D plane like Oracle-NN. The reason is that the GNN network leverages locality structures among features for further abstraction which explicitly encodes feature-level relations and could help downstream classification.

\begin{figure}[tb!]
    \centering
    \includegraphics[width=\textwidth,angle=0]{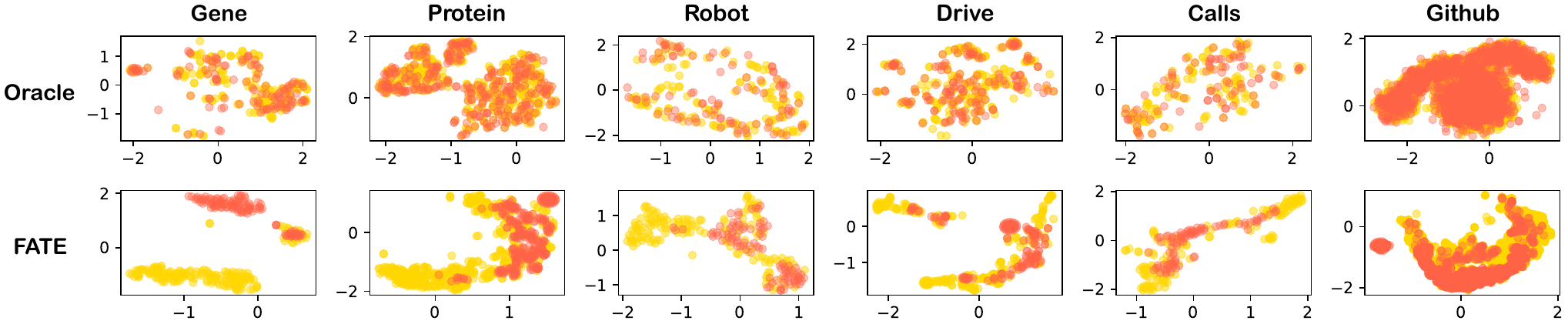}
    \vspace{-20pt}
    \caption{Visualization for t-SNE embeddings of FATE-NN's and Oracle-NN's produced feature embeddings. We mark observed features with red and the remaining ones with blue nodes.}
    \label{fig-vis}
    \vspace{-15pt}
\end{figure}

\section{Connection to Other Learning Paradigms}
Our introduced problem setting, open-world feature extrapolation (OFE), can be treated as an instantiation of out-of-distribution generalization \cite{ood-classic-2,ood-classic-3} or domain shift problem, focusing on distribution shift led by feature space expansion. We next discuss the relationships of our problem setting and our model FATE with domain adaption (DA), continual learning (CL), open-set learning (OSL) and zero-shot learning (ZSL). In general, OFE is orthogonal to these problems and opens a new direction that can potentially have promising intersections with the well-established ones.

\textbf{Domain Adaption} adapts a model trained on source domain to target with different distribution \cite{da-old1, da-old2, da-old3, da-nn1, da-nn2, da-nn3, da-nn4}. Our problem OFE is different from DA in two aspects: 1) the label space/distribution of training and test data is the same for OFE, while DA often mostly considers different label distributions for source and target domains; 2) OFE focus on combining new features that are related to the current task, while DA considers different tasks from different domains. 

\textbf{Continual Learning}, or lifelong learning, aims at enabling a single model to learn from a stream of data from different tasks that cannot be seen at one time \cite{cl-old1, cl-1, cl-2, cl-survey}. By contrast, there are two-fold differences of OFE. First, OFE does not allow finetuning or further retraining on new data, which can be more challenging than CL. Second, CL mostly assume each piece of data in the stream is from different tasks with different labels to handle. The key challenge of CL is the catastrophic forgetting \cite{cl-old1, cl-survey} that requires the model to balance a trade-off between previous and new tasks, while FATE is free from such issue in nature since we do not require incremental learning. 

\textbf{Open-set Learning} is another line of researches that relate with us. Differently, open-set recognition mostly focus on expansion of label sets \cite{openset-cv1, openset-cv2, openset-cv3, openset-cv4, openset-cv5, openset-nlp}. To our knowledge, we are the first to study feature set expansion, formulate it as OFE and further solve it via graph learning.


\textbf{Zero-shot Learning.}
Our problem setting is also linked with few-shot/zero-shot learning. In NLP domains, some studies focus on dealing with rare entities exposed in limited times or new entities unseen by training \cite{nlp-1,nlp-3}. Similar problems are also encoutered and explored in cold-start recommender systems where there are also new users/items unseen before \cite{recsys-1,recsys-2,recsys-3}. One common nature of these works aim at inferring the embeddings for new entities based on some `held-out' ones. 
With a similar end and distinct methodological aspect, a recent study \cite{recsys-3} explores a new possibility via learning a latent graph between existing entities (users) and newly arrived ones through attention mechanism for inductive representation learning. The core technical contributions of our work lie in the unique problem setting which stays focused on feature space expansion (domain shift) and the proposed feature-level sampling/partition training strategy, backed up with our theoretical insights.

\textbf{Extension and Outlooks.} Our work can be extended to solve more problems and push the development in broader areas. First, the input feature vectors can be replaced by feature maps given by CNN or word/sentence embeddings by Transformer \cite{transformer}, for handling multi-modal data in the context of federated learning \cite{mm-2} or multi-view learning \cite{mm-4,mm-6,mm-7}. Admittedly, our formulation assumes multi-hot feature vectors as input, which is a common practice for handling attribute features but not often the case for other data format (like vision or texts). For practitioners who would like to apply FATE to broader areas, one can extend our graph representation and GNN model to directly handle continuous features by treating feature values as edge weights as is done by \cite{graphlearning-missing}. Second, as shown in our experiments, the classification layers can be replaced by more complex models with inter-feature interactions \cite{ffm, xdeepfm, nfm, fwl} or advanced architectures \cite{danser, featevolve} for more sufficient feature-wise interaction and improving the expressiveness. 



\section{Conclusion}\label{sec-conclusion}

We present a new framework to address new features unseen in training, by formulating it as the open-world feature extrapolation problem. We target the problem via graph representation learning by treating observed data as a feature-data graph and further harness GNN to inductively compute embeddings for new features with those of existing ones, mimicking abstraction and reasoning process in human's brain. We also propose two training strategies for effective feature extrapolation learning. Our theoretical results show that generalization error depends on training features and learning algorithms. Experiments verify its effectiveness and scalability to large-scale systems.


\textbf{Potential Societal Impacts.} When learning mapping from features to labels, the model is at risk of focusing on dominant features from majority groups and ignoring scarce features from minority ones. Potential extended works of much significance are to develop debiased methods for feature extrapolation. We believe AI models can be guided to promote social justice and well-being.

\bibliographystyle{abbrv}
\bibliography{ref}

\clearpage

\clearpage
\appendix

\section{More Discussions for Permutation-Invariant Property of Embedding Layers}\label{appx-eq}

In Section~\ref{sec-model}, we mentioned that the embedding layer in the backbone network can be equivalently seen as a combination of embedding lookup and a sum aggregation which is permutation-invariant w.r.t. the order of input features. We provide an illustration for this in Fig.~\ref{fig-backbone}.

\textbf{Equivalence between Concatenation and Sum Aggregation.} To support the remark argument in Section~\ref{sec-model}, we next illustrate the equivalence between concatenation of features' embeddings and sum aggregation/pooling over features' embeddings. Assume we have feature embeddings $\{\mathbf z_i^m\}_{m=1}^d$ for instance $\mathbf x_i$. We concat all the embeddings as a vector $\overline {\mathbf z_i} = [\mathbf z_i^1, \mathbf z_i^2, \cdots, \mathbf z_i^m]$ and feed it into a neural layer to obtain $\mathbf c_i = \mathbf W' \overline{\mathbf z_i}$. Notice that the weight matrix $\mathbf W'\in \mathbb R^{dH\times H}$ can be decomposed into $d$ sub-matrices $\{\mathbf W'_m\}_{m=1}^d$ where $\mathbf W'_m = \mathbf W'[(m-1)H:mH,:] \in \mathbb R^{H\times H}$. If we consider sum aggregation/pooling over $\{\mathbf z_i^m\}_{m=1}^d$, i.e. $\mathbf z_i = \sum_{m=1}^d \mathbf z_i^m$, the subsequent neural layer would be a weight matrix $\mathbf W''$ with dimension $H\times H$. We can set it as $\mathbf W'' = \sum_{m=1}^d \mathbf W'_m$ and will easily obtain $\mathbf z_i \mathbf W'' = \overline{\mathbf z_i} \mathbf W' $. Hence, the concatenation plus a fully-connected layer is equivalent to sum pooling plus a fully-connected layer. This observation indicates that our reasoning in the maintext can be applied to general neural network-based models for attribute features and enable them to handle input vectors with variable-length features.


\section{Training Algorithms}\label{appx-algo}

We present the training algorithms for our model in Alg.~\ref{alg-training} where the model is trained end-to-end via self-supervised learning or inductive learning approaches.

\begin{algorithm}[h]
	\caption{Training algorithm for feature extrapolation networks (FATE).}
	\label{alg-training}
	\textbf{INPUT:} $\mathbf X_{tr}=\{x_i\}_{i\in I_{tr}}$, training data matrix, $F_{tr}=\{f_j\}$, training feature set, $\omega^{(0)}$, initial parameter for GNN, $\theta^{(0)}=[\phi^{(0)}, \mathbf W^{(0)}]$, initial parameter for backbone network (where $\phi^{(0)}$ denotes parameter for classifer and $\mathbf W^{(0)}$ denotes initial feature embeddings), $n$ split ratio, $k$ sample size, $\rho$ dropedge ratio, $\alpha_s, \alpha_f$, learning rates.\\
	{\For{$t=1,2,\cdots, T_{max}$}{
	Sample a mini-batch $\mathbf X^b=\{\mathbf x_i\}_{i\in I_b}$ from $\mathbf X_{tr}=\{\mathbf x_i\}_{i\in I_{tr}}$ // If handling large dataset, otherwise use $\mathbf X_{tr}$ directly \;
	
	\If{using self-supervised learning}{
	    Shuffle $F_{tr}=\{f_j\}$ and split into $n$ subsets $\{\overline F_s\}_{s=1}^n$ \;
    	{\For{$s=1,\cdots,n$}{
    	   $\mathbf W^{(t)}[f_j] \leftarrow \mathbf 0$, for $f_j\in \overline F_s$\;
    	   $\tilde{\mathbf X}^b = \mbox{D\eqword{ROP}} \mbox{E\eqword{DGE}}(\mathbf X^b, \rho)$\;
    	   Feed $\mathbf W^{(t)}$ and $\tilde{\mathbf X}^b$ into GNN and obtain $\hat {\mathbf W}^{(t)} = g(\mathbf W^{(t)}, \tilde{\mathbf X}^b; \omega^{(t)})$\;
    	   $\hat {\mathbf W^{(t)}}[f_j] = \mathbf W^{(t)}[f_j] $, for $f_j\in \mathcal F_{tr} \setminus \overline F_s$\;
    	   Feed $\{\mathbf x_i\}_{i\in I_b}$ into backbone network and obtain $\{\hat y_i\}_{i\in I_b}$ by $\hat y_i = h(\mathbf x_i;\phi^{(t)}, \hat {\mathbf W}^{(t)})$\;
    	   Compute the loss $\mathcal L_s(Y^b, \hat Y^b) = \frac{1}{|I_{tr}|} \sum_{i\in I{tr}} l(\hat y_i, y_i)$\;
    	   Update $\theta^{(t+1)}\leftarrow \theta^t - \alpha_f\nabla_{\theta} \mathcal L_s(Y^b, \hat Y^b)$\;
    	       }
    	}
    	Update $\omega^{(t+1)}\leftarrow \omega^t - \alpha_s \sum_{s=1}^n\nabla_{\omega} \mathcal L_s(Y^b, \hat Y^b)$\;
			}
		
		\If{using inductive learning}{
    	{\For{$s=1,\cdots,n$}{
    	Sample $k$ raw features, extract a subset $F_s$ from $F_{tr}=\{f_j\}$ and extract $\mathbf X^b_s$ from $\mathbf X^b$ \;
    	$\tilde{\mathbf X}^b_s = \mbox{D\eqword{ROP}} \mbox{E\eqword{DGE}}(\mathbf X^b_s, \rho)$\;
    	   Feed $\mathbf W^{(t)}$ and $\tilde{\mathbf X}^b_s$ into GNN and obtain $\hat {\mathbf W}^{(t)} = g(\mathbf W^{(t)}, \tilde{\mathbf X}^b_s; \omega^{(t)})$ \;
    	   Feed $\{\mathbf x_i\}_{i\in I_b}$ into backbone network and obtain $\{\hat y_i\}_{i\in I_b}$ by $\hat y_i = h(\mathbf x_i;\phi^{(t)}, \hat {\mathbf W}^{(t)})$\;
    	   Compute the loss $\mathcal L_s(Y^b, \hat Y^b) = \frac{1}{|I_{tr}|} \sum_{i\in I{tr}} l(\hat y_i, y_i)$\;
    	   $\theta^{(t+1)}\leftarrow \theta^t - \alpha_f\nabla_{\theta} \mathcal L_s(Y^b, \hat Y^b)$\;
    	       }
    	}
    	Update $\omega^{(t+1)}\leftarrow \omega^t - \alpha_s \sum_{s=1}^n\nabla_{\omega} \mathcal L_s(Y^b, \hat Y^b)$\;
			}
			
			}
			}
	\textbf{OUTPUT:} $\theta = [\phi, \mathbf W]$, trained parameter of backbone network, $\omega$, trained parameter of GNN.
\end{algorithm}

\section{Analysis of Generalization Error}\label{appx-gen}


We provide a complete discussion and proof for analysis on generalization error of our approach. Some notations are repeatedly defined in order for a self-contained presentation in this section. Recall that we focus our analysis on the case of inductive learning with k-shot sampling approach. Also, we simplify the model as :1) the backbone network is a two-layer FFN (an embedding layer $\mathbf W$ plus a fully-connected layer $\Phi$) with sigmoid output; 2) the GNN network is a $L$-layer GCN which takes mean pooling aggregation over neighbored nodes without linear transformation and non-linearity in each layer; 3) the training algorithm is SGD.

\textbf{Derivation for model function.} With our settings in Section~\ref{sec-gen}, we write the model as
\begin{equation}
    \hat y_i = h(\mathbf x_i; \phi, \hat{\mathbf W}) = h(\mathbf x_i; \phi, g(\mathbf W, \mathbf X; \omega)) = \sigma \left(\sum_{i'\in\mathcal N_{\tilde L}(i)\cup \{i\}} c^L_{ii'}\mathbf x_{i'} \mathbf W\Phi\right),
\end{equation}
where $\mathcal N_{\tilde L}(i)$ is a set which contains $\mathbf x_{i'}$'s that appear in the ${\tilde L}$-hop neighbors of $\mathbf x_i$ in the feature-data graph, $\tilde L = 2\cdot \lfloor \frac{L}{2}\rfloor$, and $c^L_{ii'}$ is a weight that quantifies influence of $\mathbf x_i$ on $\mathbf x_{i'}$ through $L$-layer mean-pooling graph convolution. Here we provide the detailed derivation. In fact, the embedding layer in the backbone can be seen as a one-layer GCN convolution using sum pooling without linear transformation and non-linearity, which can be denoted by $\hat Y = \sigma(\mathbf Z \Phi) = \sigma(\mathbf X \hat{\mathbf W} \Phi)$ where $\mathbf Z = \{\mathbf z_i\}$ (recall that $\mathbf X$ is treated as an adjacency matrix of the feature-data graph in our model). The GNN model, which is a $L$-layer GCN with mean pooling without linear transformation and non-linearity in each layer, can be denoted by $\hat {\mathbf W} = (\mathbf D_{out}^{-1}\mathbf X^\top \mathbf D_{in}^{-1} \mathbf X)^{\lfloor L/2\rfloor}\mathbf W$, where $\mathbf D_{in} = diag(\{d_{in, i}\}_{i=1}^N)$ with $d_{in, i} = \sum_{j\in F} x_{ij}$ and $\mathbf D_{out} = diag(\{d_{out, j}\}_{j=1}^F)$ with $d_{out, j} = \sum_{i\in I} x_{ij}$. Hence, we have $\mathbf Z = \mathbf X (\mathbf D_{out}^{-1}\mathbf X^\top \mathbf D_{in}^{-1} \mathbf X)^{\lfloor L/2\rfloor}\mathbf W = (\mathbf D_{out}^{-1} \mathbf D_{in}^{-1})^{\lfloor L/2\rfloor} \mathbf X (\mathbf X^\top \mathbf X)^{\lfloor L/2\rfloor} \mathbf W = (\mathbf D_{out}^{-1} \mathbf D_{in}^{-1})^{\lfloor L/2\rfloor} (\mathbf X \mathbf X^\top)^{\lfloor L/2\rfloor} \mathbf X \mathbf W$. Let $(\mathbf D_{out}^{-1} \mathbf D_{in}^{-1})^{\lfloor L/2\rfloor} (\mathbf X \mathbf X^\top)^{\lfloor L/2\rfloor} = \mathbf C_{L} \in \mathbb R^{N\times N}$ and $\mathbf C_{L} = \{c_{ii'}^L\}$ where $c_{ii'}^L$ denotes the a weight that quantifies influence between instance $i$ and $i'$ through $L$-layer GCN. Converting the global view of graph convolution into a local view for each node's ego-network, we can obtain $\hat y_i = \sigma(\mathbf z_i \Phi) = \sigma \left(\sum_{i'\in\mathcal N_{\tilde L}(i)\cup \{i\}} c^L_{ii'}\mathbf x_{i'} \mathbf W\Phi\right)$. 

With given training data $(\mathbf X_{tr}, Y_{tr})$, we define $\mathcal S$ as a set of all the data sub-matrices that could be sampled and exposed to the model during training 
\[
    \mathcal S = \{(\mathbf X_1, Y_1), (\mathbf X_2, Y_2), \cdots, (\mathbf X_m, Y_m), \cdots, (\mathbf X_M, Y_M)\}.
\]
The SGD training can be seen as a sequence of operations each of which picks an instance from $\mathcal S$ in an i.i.d. manner as a proxy training data and leverage it to compute updating gradient. We further introduce $\mathcal S^{\setminus m}$ which removes the $m$-th sub-matrix and $\mathcal S^{m}$ which replaces $m$-th sub-matrix by another one. Specifically, we have
\[
    \mathcal S^{\setminus m} = \{(\mathbf X_1, Y_1), \cdots, (\mathbf X_{m-1}, Y_{m-1}), (\mathbf X_{m+1}, Y_{m+1}) \cdots, (\mathbf X_M, Y_M)\},
\]
\[
    \mathcal S^m = \{(\mathbf X_{1}, Y_{1}), \cdots, (\mathbf X_{{m-1}}, Y_{{m-1}}), (\mathbf X_{{m}'}, Y_{{m}'}), (\mathbf X_{{m+1}}, Y_{{m+1}}) \cdots, (\mathbf X_{M}, Y_{M})\}.
\]

\textbf{Justification of the i.i.d. Sampling.} In fact, for our inductive learning approach in Section~\ref{sec-training}, the observed features for each proxy data are randomly sampled. The feature-level sampling at one time can be seen as $k$ times i.i.d. sampling from all the raw features without replacement. Denote $C=\{c_n\}_{n=1}^d$ as a set of all the raw features in training set, $\mathcal K$ as a set of $k$ distinct indices in $\{1,\cdots, d\}$ and $C_{\mathcal K}$ denotes a subset of raw features with indices from $\mathcal K$. Obviously, there are ${d \choose k}$ different configurations for $\mathcal K$ (or $C_{\mathcal K}$) in total. We can equivalently treat once feature-level sampling as a one-time i.i.d. sampling from a set of candidates $\{\mathcal K_m\}$ which contains ${d \choose k}$ index sets and each index set $\mathcal K_m$ contains $k$ indices from $\{1,\cdots, d\}$. Next we discuss two cases. 

1) If we do not consider instance-level mini-batch partition, then the set $\mathcal S$ will consist of $M = { d\choose k}$ sub-matrices. Specifically, the $m$-th sub-matrix $\mathbf X_m$ is induced by $C_{\mathcal K_m}$ which extracts the columns (corresponding to features generated by raw features in $C_{\mathcal K_m}$) of $\mathbf X_{tr}$. 

2) If we use instance-level mini-batch partition, the case would be a bit more complicated. First of all, the instance-level partition is not a strictly i.i.d. sampling process over training instances since their exists dependency among different mini-batches in one epoch. Yet, in practice, the batch size $B$ is very large (e.g. $B=100000$ in our experiment), so the number of mini-batches in one epoch is much smaller than $B$, which allows us to neglect the dependency in one epoch. Furthermore, since the instance-level selection is dependent of feature-level sampling, the whole sampling process for proxy data can be seen as a series of i.i.d. sampling over ${ N\choose B} \times { d\choose k}$ sub-matrices of $\mathbf X_{tr}$, which consists of the set $\mathcal S$ in this case.

Next, we recall the generalization gap of our interests. The generalization error $R(h_{\mathcal S})$ is defined as
\begin{equation}
    R(h_{\mathcal S}) = \mathbb E_{(\mathbf X, Y)}[\mathcal L(Y, h(\mathbf X; \psi_{\mathcal S}))].
\end{equation}
where the expectation contains two stages of sampling: 1) a feature set $F=\{f_j\}$ is sampled according to $f_j\sim \mathcal F$, and 2) data $(\mathbf X, Y)$ is sampled according to $(\mathbf x_i, y_i)\sim \mathcal D_F$. The empirical risk that our approach optimizes with the training data would be
\begin{equation}
    R_{emp}(h_{\mathcal S}) = \frac{1}{M} \sum_{m=1}^M \mathcal L(Y_{m}, h(\mathbf X_{m}; \psi_{\mathcal S})).
\end{equation}
Then the expected generalization gap would be
\begin{equation}
    \mathbb E_{A}[R(h_{\mathcal S}) - R_{emp}(h_{\mathcal S})],
\end{equation}
where the expectation is taken over the randomness of $A_{\mathcal S}$ that stems from sampling in SGD.

We next prove the result in Theorem~\ref{thm-gen-main} in our maintext. Our proof is based on algorithmic stability analysis \cite{stability-old}, following similar lines of reasoning in \cite{stability-old2,stability-kdd}. The main idea of the stability analysis is to bound the output difference of a loss function from a single data point perturbation. Differently, in our case, the `data point' is a data sub-matrix in $\mathcal S$. Therefore, our proof can be seen an extension of stability analysis to matrix data or graph as input. The proof can be divided into two parts. First, we derive a generalization error bound on condition of $\gamma$-uniform stability of the learning algorithm, Then we prove the bound for $\gamma$ based on our model architecture and SGD training. 

\subsection{Generalization error with uniform stability condition}

We first introduce the definition for uniform stability of a randomized learning algorithm as a building block of our proof. A randomized learning algorithm $A_{\mathcal S}$ is $\gamma$-uniform stable with regard to loss function $\mathcal L$ if it satisfies
\begin{equation}\label{eqn-stability1}
    \sup_{\mathcal S, (\mathbf X, Y)} |\mathbb E_{\mathcal S}[\mathcal L(Y, h(\mathbf X;\psi_{\mathcal S}))] - \mathbb E_{S}[\mathcal L(Y, h(\mathbf X;\psi_{\mathcal S^{\setminus m}}))]| \leq \gamma.
\end{equation}
We first prove a generalization bound using the uniform stability as a condition and then we prove that the learning algorithm in our case satisfies the condition. 
\begin{theorem}\label{thm-gen-con}
Assume a randomized algorithm $A_{\mathcal S}$ is $\gamma$-uniform stable with a bounded loss function $0\leq \mathcal L(Y, h(\mathbf X; \psi_{\mathcal S})) \leq L$. Then with probability at-least $1-\delta$ ($0<\delta<1$), over the random draw of $\mathcal S$, we have
\begin{equation}
    \mathbb E_{A}[R(h_{\mathcal S}) - R_{emp}(h_{\mathcal S})] \leq 2\cdot \gamma + (4M\gamma + L)\sqrt{\frac{\log\frac{1}{\delta}}{2M}}.
\end{equation}
\end{theorem}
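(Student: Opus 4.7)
The plan is to prove this as a Bousquet--Elisseeff style result, treating $\mathcal{S}$ as an i.i.d.\ sample of sub-matrices (justified by the discussion preceding the theorem) and running the usual two-step recipe: first a bound in expectation using uniform stability, then a high-probability bound via McDiarmid's bounded-differences inequality. Setting $\Psi(\mathcal S) := \mathbb{E}_A[R(h_{\mathcal S}) - R_{emp}(h_{\mathcal S})]$, the task reduces to (i) bounding $\mathbb{E}_{\mathcal S}[\Psi(\mathcal S)]$, and (ii) showing $\Psi$ has small bounded differences under single sub-matrix replacement $\mathcal S \to \mathcal S^m$.

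For step (i), I would exploit the i.i.d.\ structure of $\mathcal S$ together with a renaming argument. Writing $\mathbb{E}_{\mathcal S}\mathbb{E}_A[R_{emp}(h_{\mathcal S})] = \frac{1}{M}\sum_{m=1}^M \mathbb{E}_{\mathcal S}\mathbb{E}_A[\mathcal L(Y_m, h(\mathbf X_m; \psi_{\mathcal S}))]$ and, for each $m$, introducing a fresh independent copy $(\mathbf X, Y)$ and setting $\mathcal S^{(m)} = (\mathcal S \setminus \{(\mathbf X_m, Y_m)\}) \cup \{(\mathbf X, Y)\}$, symmetry of the i.i.d.\ draw gives
\[
\mathbb{E}_{\mathcal S, (\mathbf X, Y)}\mathbb{E}_A[\mathcal L(Y_m, h(\mathbf X_m; \psi_{\mathcal S}))] = \mathbb{E}_{\mathcal S, (\mathbf X, Y)}\mathbb{E}_A[\mathcal L(Y, h(\mathbf X; \psi_{\mathcal S^{(m)}}))].
\]
Two applications of the $\gamma$-uniform stability condition \eqref{eqn-stability1}, once for $\mathcal S \to \mathcal S^{\setminus m}$ and once for $\mathcal S^{(m)} \to \mathcal S^{\setminus m}$, and the triangle inequality yield $|\mathbb{E}_{\mathcal S}\mathbb{E}_A[\mathcal L(Y, h(\mathbf X; \psi_{\mathcal S}))] - \mathbb{E}_{\mathcal S, (\mathbf X, Y)}\mathbb{E}_A[\mathcal L(Y, h(\mathbf X; \psi_{\mathcal S^{(m)}}))]| \leq 2\gamma$. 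Averaging over $m$ delivers $|\mathbb{E}_{\mathcal S}[\Psi(\mathcal S)]| \leq 2\gamma$.

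For step (ii), I would bound $|\Psi(\mathcal S) - \Psi(\mathcal S^m)|$ by splitting into the population and empirical parts. By two applications of stability (as above), $|\mathbb{E}_A[R(h_{\mathcal S})] - \mathbb{E}_A[R(h_{\mathcal S^m})]| \leq 2\gamma$. For the empirical risk, decompose
\[
R_{emp}(h_{\mathcal S}) - R_{emp}(h_{\mathcal S^m}) = \tfrac{1}{M}\!\!\sum_{i\neq m}\!\![\mathcal L(Y_i, h(\mathbf X_i; \psi_{\mathcal S})) - \mathcal L(Y_i, h(\mathbf X_i; \psi_{\mathcal S^m}))] + \tfrac{1}{M}[\mathcal L(Y_m, h(\mathbf X_m;\psi_{\mathcal S})) - \mathcal L(Y_{m'}, h(\mathbf X_{m'};\psi_{\mathcal S^m}))].
\]
The first sum is bounded by $\frac{M-1}{M}\cdot 2\gamma \leq 2\gamma$ term-by-term via stability, while the last term is at most $L/M$ since the loss lies in $[0,L]$. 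Combining gives $|\Psi(\mathcal S) - \Psi(\mathcal S^m)| \leq 4\gamma + L/M$.

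The final step is to apply McDiarmid's inequality to $\Psi$ with $c_m = 4\gamma + L/M$ for each $m$, giving
\[
\Pr\!\left[\Psi(\mathcal S) - \mathbb{E}_{\mathcal S}[\Psi(\mathcal S)] \geq \epsilon\right] \leq \exp\!\left(\tfrac{-2\epsilon^2}{M(4\gamma + L/M)^2}\right),
\]
so with probability at least $1-\delta$, $\Psi(\mathcal S) - \mathbb{E}_{\mathcal S}[\Psi(\mathcal S)] \leq (4M\gamma + L)\sqrt{\log(1/\delta)/(2M)}$. Adding the expectation bound $\mathbb{E}_{\mathcal S}[\Psi(\mathcal S)] \leq 2\gamma$ from step (i) gives the claim. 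The main delicate point is step (i): one must be careful that the renaming argument respects the i.i.d.\ structure the paper invokes for $\mathcal S$ (discussed for the feature-level and instance-level sampling in Appendix~\ref{appx-gen}), and that $\mathbb{E}_A$ and $\mathbb{E}_{\mathcal S}$ can be interchanged, which follows because the algorithmic randomness is independent of the draw of $\mathcal S$. Once the i.i.d.\ abstraction is accepted, the argument is a direct adaptation of the classical stability-to-generalization reduction to the setting where each ``example'' is a proxy sub-matrix rather than a single datum.
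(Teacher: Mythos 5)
Your proposal is correct and follows essentially the same route as the paper's proof: the same renaming argument under the i.i.d.\ draw of $\mathcal S$ combined with two applications of the removal-based stability condition to get the expectation bound $2\gamma$, the same $4\gamma + L/M$ bounded-differences estimate split into population and empirical parts, and the same application of McDiarmid's inequality to $\mathbb E_A[R(h_{\mathcal S}) - R_{emp}(h_{\mathcal S})]$ yielding the stated bound. The only cosmetic difference is notational (the paper writes $\lambda$ for the loss bound $L$ inside its proof), so there is nothing substantive to reconcile.
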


\begin{proof}
Using triangle inequality, the stability property in \eqref{eqn-stability1} yields,
\begin{equation}
\begin{split}
    &\sup_{\mathcal S, (\mathbf X, Y)} |\mathbb E_{\mathcal S}[\mathcal L(Y, h(\mathbf X;\psi_{\mathcal S}))] - \mathbb E_{S}[\mathcal L(Y, h(\mathbf X;\psi_{\mathcal S^{m}}))]| \\
    \leq & \sup_{\mathcal S, (\mathbf X, Y)} |\mathbb E_{\mathcal S}[\mathcal L(Y, h(\mathbf X;\psi_{\mathcal S}))] - \mathbb E_{S}[\mathcal L(Y, h(\mathbf X;\psi_{\mathcal S^{\setminus m}}))]| \\
    + & \sup_{\mathcal S, (\mathbf X, Y)} |\mathbb E_{\mathcal S}[\mathcal L(Y, h(\mathbf X;\psi_{\mathcal S^m}))] - \mathbb E_{S}[\mathcal L(Y, h(\mathbf X;\psi_{\mathcal S^{\setminus m}}))]|\\
    \leq & \quad 2\gamma.
\end{split}
\end{equation}

We will use McDiarmid's concentration inequality for the following proof. Let $\mathbf Z$ be a random variable set and $f: \mathbf Z^M \rightarrow \mathbb R$. If it satisfies
\begin{equation}
    \sup_{z_1,\cdots, z_m, \cdots, z_M, z_{m}'} |f(z_1, \cdots, z_m, \cdots, z_M) - f(z_1, \cdots, z_m',\cdots, z_M)| \leq c_m,
\end{equation}
then we have
\begin{equation}\label{eqn-mcineq}
    P(f(z_1, \cdots, z_M) - \mathbb E_{z_1, \cdots, z_M}[f(z_1, \cdots, z_M)] \geq \epsilon) \leq \exp{\left(-\frac{2\epsilon^2}{\sum_{m=1}^M c_m^2}\right)}.
\end{equation}
Recall that data $(\mathbf X_{s}, Y_{s})$ are assumed to be i.i.d. sampled, so we have (assuming $\mathbf O = (\mathbf X, Y)$) 
\begin{equation}
    \begin{split}
        &\mathbb E_{\mathcal S}[\mathcal L(Y_{m}, h(\mathbf X_{m}; \psi_{\mathcal S}))]  \\
        = &
        \int \mathcal L(Y_{m}, h(\mathbf X_{m}; \psi_{\mathcal S})) p(\mathbf O_{1}, \cdots, \mathbf O_{M}) d \mathbf O_{1}\cdots d \mathbf O_{M} \\
        = & \int \mathcal L(Y_{m}, h(\mathbf X_{m}; \psi_{\mathcal S})) p(\mathbf O_{1})\cdots p(\mathbf O_{M})) d \mathbf O_{1}\cdots d \mathbf O_{M} \\
        = & \int \mathcal L(Y_{m'}, h(\mathbf X_{m'}; \psi_{\mathcal S^m})) p(\mathbf O_{1})\cdots p(\mathbf O_{m'})\cdots p(\mathbf O_{M}) d \mathbf O_{1}\cdots d \mathbf O_{m'} \cdots \mathbf O_{M} \\
        = & \int \mathcal L(Y_{m}, h(\mathbf X_{m}; \psi_{\mathcal S^m})) p(\mathbf O_{1}, \cdots, \mathbf O_{m'}, \cdots, \mathbf O_{M}) d \mathbf O_{1}\cdots d \mathbf O_{m'} \cdots d \mathbf O_{M} \times \int p(\mathbf O_{m}) d \mathbf O_{m} \\
        = & \int \mathcal L(Y_{m'}, h(\mathbf X_{m'}; \psi_{\mathcal S^m})) p(\mathbf O_{1}, \cdots, \mathbf O_{m'}, \mathbf O_{m}, \cdots, \mathbf O_{M}) d \mathbf O_{1}\cdots d \mathbf O_{M} d \mathbf O_{m'}\\
         = &\mathbb E_{\mathcal S, \mathbf O_{m'}}[\mathcal L(Y_{m'}, h(\mathbf X_{m'}; \psi_{\mathcal S^m}))].
    \end{split}
\end{equation}
Using above equation and the $\gamma$-uniform stability we have
\begin{equation}\label{eqn-proof-thm1-3}
    \begin{split}
        & \mathbb E_{\mathcal S}[\mathbb E_{A}[R(h_{\mathcal S}) - R_{emp}(h_{\mathcal S})]] \\
        = & \mathbb E_{\mathcal S}[\mathbb E_{\mathbf O}[\mathbb E_{A}[\mathcal L(Y, h(\mathbf X; \psi_{\mathcal S}))]]]
        - \frac{1}{M} \sum_{m=1}^M \mathbb E_{\mathcal S}[\mathbb E_{A}[ \mathcal L(Y_{m}, h(\mathbf X_{m}; \psi_{\mathcal S})) ]]\\
        = & \mathbb E_{\mathcal S}[\mathbb E_{\mathbf O}[\mathbb E_{A}[\mathcal L(Y, h(\mathbf X; \psi_{\mathcal S}))]]] 
        - \mathbb E_{\mathcal S}[\mathbb E_{A}[ \mathcal L(Y_{m}, h(\mathbf X_{m}; \psi_{\mathcal S})) ]]\\
        = & \mathbb E_{\mathcal S, \mathbf O_{{m}'}}[\mathbb E_{A}[ \mathcal L(Y_{m'}, h(\mathbf X_{m'}; \psi_{\mathcal S})) ]]
        - \mathbb E_{\mathcal S, \mathbf O_{{m}'}}[\mathbb E_{A}[ \mathcal L(Y_{m'}, h(\mathbf X_{m'}; \psi_{\mathcal S^m})) ]] \\
        = & \mathbb E_{\mathcal S, \mathbf O_{{m}'}}[ \mathbb E_{A}[\mathcal L(Y_{m'}, h(\mathbf X_{m'}; \psi_{\mathcal S})) - \mathcal L(Y_{m'}, h(\mathbf X_{m'}; \psi_{\mathcal S^m})) ] ] \\
        \leq & \mathbb E_{\mathcal S, \mathbf O_{{m}'}}[ \mathbb E_{A}[ | \mathcal L(Y_{m'}, h(\mathbf X_{m'}; \psi_{\mathcal S})) - \mathcal L(Y_{m'}, h(\mathbf X_{m'}; \psi_{\mathcal S^m})) | ] ] \\ 
        \leq & 2\gamma.
    \end{split}
\end{equation}
Also we have the following inequalities,
\begin{equation}\label{eqn-proof-thm1-1}
    \begin{split}
        |\mathbb E_{A}[R(h_{\mathcal S}) - R(h_{\mathcal S^m})]|  & = |\mathbb E_{\mathbf O}[\mathbb E_{A}[\mathcal L(Y, h(\mathbf X; \psi_{\mathcal S}))]] - \mathbb E_{\mathbf O}[\mathbb E_{A}[\mathcal L(Y, h(\mathbf X; \psi_{\mathcal S^m}))]]| \\
        & = |\mathbb E_{\mathbf O}[\mathbb E_{A}[\mathcal L(Y, h(\mathbf X; \psi_{\mathcal S}))]] - \mathbb E_{A}[\mathcal L(Y, h(\mathbf X; \psi_{\mathcal S^m}))]| \\
        & \leq \mathbb E_{\mathbf O}[\mathbb E_{A}[|\mathcal L(Y, h(\mathbf X; \psi_{\mathcal S})) - \mathcal L(Y, h(\mathbf X; \psi_{\mathcal S^m}))|]] \\
        & \leq 2\gamma,
    \end{split}
\end{equation}
\begin{equation}\label{eqn-proof-thm1-2}
    \begin{split}
        |\mathbb E_{A}[R_{emp}(h_{\mathcal S}) - R_{emp}(h_{\mathcal S^m})]|
        \leq &
        |\frac{1}{M} \sum_{m'=1, m'\neq m}^M  (\mathbb E_{A}[\mathcal L(Y_{{m'}}, h(\mathbf X_{{m'}}; \psi_{\mathcal S})) - \mathcal L(Y_{{m'}}, h(\mathbf X_{{m'}}; \psi_{\mathcal S^m}))])|\\
        +& |\frac{1}{M} \mathbb E_{A}[\mathcal L(Y_{{m}}, h(\mathbf X_{{m}}; \psi_{\mathcal S})) - \mathcal L(Y_{{m}'}, h(\mathbf X_{{m}'}; \psi_{\mathcal S^m}))]| \\
        \leq & 2\frac{M-1}{M} \gamma + \frac{\lambda}{M} \\
        \leq & 2\gamma + \frac{\lambda}{M}.
    \end{split}
\end{equation}
Letting $K_{\mathcal S} = R(h_{\mathcal S}) - R_{emp}(h_{\mathcal S})$ and using \eqref{eqn-proof-thm1-1} and \eqref{eqn-proof-thm1-2}, we obtain
\begin{equation}
    \begin{split}
        |\mathbb E_{A}[K_{\mathcal S}] - \mathbb E_{A}[K_{\mathcal S^m}]| & = \left | \mathbb E_{A}[R(h_{\mathcal S}) - R_{emp}(h_{\mathcal S})] - \mathbb E_{A}[R(h_{\mathcal S^m}) - R_{emp}(h_{\mathcal S^m})] \right | \\
        & \leq \left | \mathbb E_{A}[R(h_{\mathcal S})] - \mathbb E_{A}[R(h_{\mathcal S^m})] \right |  + \left | \mathbb E_{A}[R_{emp}(h_{\mathcal S})] - \mathbb E_{A}[R_{emp}(h_{\mathcal S^m})] \right |\\
        & \leq 2\gamma + (2\gamma + \frac{\lambda}{M}) \\
        & \leq 4\gamma + \frac{\lambda}{M}.
    \end{split}
\end{equation}
Based on the above fact, we can apply the result of \eqref{eqn-mcineq},
\begin{equation}
    P(\mathbb E_{A}[K_{\mathcal S}] - \mathbb E_{\mathcal S}[\mathbb E_{A}[K_{\mathcal S}]] \geq \epsilon) \leq \exp{\left (-\frac{2\epsilon^2}{M(4\gamma + \frac{\lambda}{M})^2}\right )}.
\end{equation}
Letting $\delta = \exp{(-\frac{2\epsilon^2}{M(4\gamma + \frac{\lambda}{M})^2})}$ and using \eqref{eqn-proof-thm1-3}, we obtain the following result and conclude the proof.
\begin{equation}
    P\left (\mathbb E_{A}[K_{\mathcal S}] \leq 2\gamma + (4M\gamma + \lambda)\sqrt{\frac{\log{(1/\delta)}}{2M}} \right) \geq 1 - \delta.
\end{equation}
\end{proof}

\subsection{Deriving bound for $\gamma$}

We proceed to prove our main result in Theorem~\ref{thm-gen-main} by deriving the bound for $\gamma$ based on the SGD algorithm and our GNN model. Let $\Theta_{\mathcal S}$ and $\Theta_{\mathcal S^m}$ denote the weight matrix of the classifier in the backbone network. Recall that our model is $\hat y_i = h(\mathbf x_i; \psi) = \sigma(\sum_{i'\in\mathcal N_{\tilde L}(i)\cup \{i\} } c_{ii'}^L\mathbf x_{i'} \mathbf W\Phi)$. Hence, we have
\begin{equation}\label{eqn-bound}
    \begin{split}
        & |\mathbb E_{SGD}[\mathcal L(Y, h(\mathbf X; \psi_{\mathcal S})) - \mathcal L(Y, h(\mathbf X; \psi_{\mathcal S^m}))] | \\
        = & \left|\mathbb E_{SGD}\left[ \frac{1}{N}\sum_{i\in I}l(y_i, h(\mathbf x_i; \psi_{\mathcal S})) - \frac{1}{N}\sum_{i\in I}l(y_i, h(\mathbf x_i; \psi_{\mathcal S^m}))\right] \right| \\
        \leq & \frac{\beta}{N}\mathbb E_{SGD}\left [\sum_{i\in I}|h(\mathbf x_i; \psi_{\mathcal S}) - h(\mathbf x_i; \psi_{\mathcal S^m})| \right] \quad(\mbox{since}\; l(\cdot, \cdot)\; \mbox{is} \; \beta\mbox{-Lipschitz})  \\
        = & \frac{\beta}{N}\mathbb E_{SGD}\left [\sum_{i\in I}\left | \sigma \left (\sum_{i'\in\mathcal N_{\tilde L}(i)\cup \{i\} } c_{ii'}^L\mathbf x_{i'} \mathbf W_{\mathcal S}\Phi_{\mathcal S} \right) - \sigma \left (\sum_{i'\in\mathcal N_L(i)\cup \{i\}} c_{ii'}^L\mathbf x_{i'} \mathbf W_{\mathcal S^m}\Phi_{\mathcal S^m}\right ) \right |\right]  \\
        \leq & \frac{\beta}{N}\mathbb E_{SGD}\left [\sum_{i\in I} \left | \sum_{i'\in\mathcal N_{\tilde L}(i)\cup \{i\} } c_{ii'}^L\mathbf x_{i'} \mathbf W_{\mathcal S}\Phi_{\mathcal S} - \sum_{i'\in\mathcal N_L(i)\cup \{i\}} c_{ii'}^L\mathbf x_{i'} \mathbf W_{\mathcal S^m}\Phi_{\mathcal S^m} \right | \right]  \\
        & (\mbox{due to the fact} \; |\sigma(x) - \sigma(y)| \leq |x-y|)\\
        \leq & \frac{\beta}{N} \mathbb E_{SGD} \left [\sum_{i\in I} \left \|\sum_{i'\in\mathcal N_{\tilde L}(i)\cup \{i\} } c_{ii'}^L\mathbf x_{i'}\right \|_2\cdot  \|\mathbf W_{\mathcal S}\Phi_{\mathcal S} - \mathbf W_{\mathcal S^m}\Phi_{\mathcal S^m} \|_2 \right ] \\
        \leq & \frac{\beta}{N} \sum_{i\in I}\left\|\sum_{i'\in\mathcal N_{\tilde L}(i)\cup \{i\} } c_{ii'}^L\mathbf x_{i'}\right \|_2 \mathbb E_{SGD}[\|\mathbf W_{\mathcal S}\Phi_{\mathcal S} - \mathbf W_{\mathcal S^m}\Phi_{\mathcal S^m}\|_2].
    \end{split}
\end{equation}
We need to bound the two terms in \eqref{eqn-bound}. First, notice that for $\forall \mathbf x_{i}$, it satisfies $\|\mathbf x_{i}\|_2 \leq \sqrt{d}$ and $\|x_{ij}\|_1 \leq d$ and the graph convolution with mean pooling induce the fact that $\|\sum_{i'\in\mathcal N_{\tilde L}(i)\cup \{i\} } c_{ii'}^L\mathbf x_{i'} \|_1 \leq d$. Using the inequality of arithmetic and geometric means, we have $\|\sum_{i'\in\mathcal N_{\tilde L}(i)\cup \{i\} } c^L_{ii'}\mathbf x_{i'}\|_2 \leq \sqrt{d}$.

We proceed to bound the second term by considering the randomness of SGD. We can define $\Psi_{\mathcal S} = \mathbf W_{\mathcal S}\Phi_{\mathcal S}$ as model parameters and we need to derive bound for $\mathbb E_{SGD}[\|\Psi_{\mathcal S} - \Psi_{\mathcal S^m}\|_2]$. Then define a sequence of model parameters $\{\Psi_{\mathcal S, 0}, \Psi_{\mathcal S, 1}, \cdots, \Psi_{\mathcal S, T}\}$ where $\Psi_{\mathcal S, t}$ denotes the model parameters learned by SGD on $\mathcal S$ with the updating in $t$-th step as
\begin{equation}\label{eqn-sgd}
    \Psi_{\mathcal S, t+1} = \Psi_{\mathcal S, t} - \alpha \nabla_{\Psi} \mathcal L(h(\mathbf X_{t}; \Psi_{\mathcal S, t}), Y_{t}) = \Psi_{\mathcal S, t} - \alpha \frac{1}{N_{t}} \sum_{i\in I_{t}} \nabla_{\Psi} l(h(\mathbf x_i; \Psi_{\mathcal S, t}), y_i).
\end{equation}
Similarly, $\{\Psi_{\mathcal S^m, 0}, \Psi_{\mathcal S^m, 1}, \cdots, \Psi_{\mathcal S^m, T}\}$ denotes a sequence of model parameters learned by SGD on $\mathcal S^m$. We then derive bound for $\Delta\Theta_t = \Psi_{\mathcal S, t} - \Psi_{\mathcal S^m, t}$ by considering two cases.

First, at step $t$, SGD picks data $(\mathbf X_{t}, Y_{t})$ and $t\neq m$, i.e., $(\mathbf X_{t}, Y_{t})$ exists in both $\mathcal S$ and $\mathcal S^m$. This case will happen with probability $\frac{M-1}{M}$. The derivative of model output is
\begin{equation}
    \frac{\partial h(\mathbf x_i; \Psi)}{\partial \Psi} = \sigma' \left(\sum_{i'\in\mathcal N_{\tilde L}(i)\cup \{i\} } c^L_{ii'}\mathbf x_{i'} \Psi \right) \cdot \sum_{i'\in\mathcal N_{\tilde L}(i)\cup \{i\} } c^L_{ii'}\mathbf x_{i'}.
\end{equation}
Using the fact $|\sigma'(x) - \sigma'(y)|\leq |\sigma(x) - \sigma(y)| \leq |x-y|$, we have
\begin{equation}\label{eqn-sgd-case1}
    \begin{split}
        & \|\nabla_{\Psi} \mathcal L(h(\mathbf X_{t}; \Psi_{\mathcal S, t}), Y_{t}) - \nabla_{\Psi} \mathcal L(h(\mathbf X_{t}; \Psi_{\mathcal S^m, t}), Y_{t})\|_2 \\
        \leq & \frac{1}{N_{t}} \sum_{i\in I_{t}} \|\nabla_{\Psi} l(h(\mathbf x_i; \Psi_{\mathcal S, t}), y_i) - \nabla_{\Psi} l(h(\mathbf x_i; \Psi_{\mathcal S^m, t}), y_i)\|_2 \\
        \leq &  \frac{\beta'}{N_{t}} \sum_{i\in I_{t}} \|\nabla_{\Psi}h(\mathbf x_i; \Psi_{\mathcal S, t}) -\nabla_{\Psi} h(\mathbf x_i; \Psi_{\mathcal S^m, t}) \|_2 \\
        \leq & \frac{\beta'}{N_{t}} \sum_{i\in I_{t}} \left \| \sigma' \left(\sum_{i'\in\mathcal N_{\tilde L}(i)\cup \{i\} } c^L_{ii'}\mathbf x_{i'} \Psi_{\mathcal S, t} \right) \cdot\sum_{i'\in\mathcal N_{\tilde L}(i)\cup \{i\} } c^L_{ii'}\mathbf x_{i'} - \sigma'\left (\sum_{i'\in\mathcal N_{\tilde L}(i)\cup \{i\} } c^L_{ii'}\mathbf x_{i'} \Psi_{\mathcal S^m, t}\right)\cdot \sum_{i'\in\mathcal N_{\tilde L}(i)\cup \{i\} } c^L_{ii'}\mathbf x_{i'} \right\|_2 \\
        \leq & \frac{\beta'}{N_{t}} \sum_{i\in I_{t}} \left\|\sum_{i'\in\mathcal N_{\tilde L}(i)\cup \{i\} } c^L_{ii'}\mathbf x_{i'}\right \|_2 \left \|\sigma'\left (\sum_{i'\in\mathcal N_{\tilde L}(i)\cup \{i\} } c^L_{ii'}\mathbf x_{i'} \Psi_{\mathcal S, t}\right) - \sigma' \left(\sum_{i'\in\mathcal N_{\tilde L}(i)\cup \{i\} } c^L_{ii'}\mathbf x_{i'} \Psi_{\mathcal S^m, t}\right)\right \|_2 \\
        \leq & \frac{\beta'}{N_{t}} \sum_{i\in I_{t}} \sqrt{d}\left \|\sum_{i'\in\mathcal N_{\tilde L}(i)\cup \{i\} } c^L_{ii'}\mathbf x_{i'} \Psi_{\mathcal S, t} - \sum_{i'\in\mathcal N_{\tilde L}(i)\cup \{i\} } c^L_{ii'}\mathbf x_{i'} \Psi_{\mathcal S^m, t}\right \|_2 \quad (\mbox{due to} \; |\sigma'(x) - \sigma'(y)|\leq |x-y| ) \\
        \leq & \frac{\beta'}{N_{t}} \sum_{i\in I_{t}} \sqrt{d} \left \|\sum_{i'\in\mathcal N_{\tilde L}(i)\cup \{i\} } c^L_{ii'}\mathbf x_{i'}\right \|_2 \|\Psi_{\mathcal S, t} - \Psi_{\mathcal S^m, t}\|_2 \\
        \leq & \beta' d \|\Delta \Psi_t\|_2.
    \end{split}
\end{equation}

Second, at step $t$, SGD picks $(\mathbf X_{t}, Y_{t})$ and $t = m$, i.e., $(\mathbf X_{t}, Y_{t})$ picked by the algorithm on $\mathcal S$ and $(\mathbf X'_{t}, Y'_{t})$ picked by the algorithm on $\mathcal S^m$ are distinct. This case would happen with probability $\frac{1}{M}$. We have
\begin{equation}\label{eqn-sgd-case2}
\begin{split}
& \|\nabla_{\Psi} \mathcal L(h(\mathbf X_{t}; \Psi_{\mathcal S, t}), Y_{t}) - \nabla_{\Psi} \mathcal L(h(\mathbf X'_{t}; \Psi_{\mathcal S^m, t}), Y'_{t})\|_2 \\
\leq & \frac{1}{N_{t}} \sum_{i\in I_{t}, j = I'_{t}[i]} \| \nabla_{\Psi} l(h(\mathbf x_i; \Psi_{\mathcal S, t}), y_i) - \nabla_{\Psi} l(h(\mathbf x_i'; \Psi_{\mathcal S^m, t}), y_i')\|_2 \\
& (\mbox{since}\; N_{t} = N'_{t}\; \mbox{and assume} \; I'_{t}[i]  \; \mbox{denotes the} \; i\mbox{-th entry in} \; I'_{t}) \\
\leq &  \frac{\beta'}{N_{t}} \sum_{i\in I_{t}, j = I'_{t}[i]}  \| \sigma' \left (\sum_{i'\in\mathcal N_{\tilde L}(i)\cup \{i\} } c^L_{ii'}\mathbf x_{i'} \Psi_{\mathcal S, t} \right )\cdot \sum_{i'\in\mathcal N_{\tilde L}(i)\cup \{i\} } c^L_{ii'}\mathbf x_{i'}  \\
- &  \sigma'\left (\sum_{i'\in\mathcal N_{\tilde L}(j)\cup \{j\} } c^L_{ji'}\mathbf x_{i'} \Psi_{\mathcal S^m, t} \right ) \cdot \sum_{i'\in\mathcal N_{\tilde L}(j)\cup \{j\} } c^L_{ji'}\mathbf x_{i'} \|_2 \\
\leq & \frac{\beta'}{N_{t}} \sum_{i\in I_{t}, j = I'_{t}[i]} \left \| \sigma'\left (\sum_{i'\in\mathcal N_{\tilde L}(j)\cup \{j\} } c^L_{ji'}\mathbf x_{i'} \Psi_{\mathcal S, t}\right ) \cdot \sum_{i'\in\mathcal N_{\tilde L}(j)\cup \{j\} } c^L_{ji'}\mathbf x_{i'} \right \|_2 \\
+ & \frac{\beta'}{N_{t}} \sum_{i\in I_{t}, j = I'_{t}[i]} \left \|\sigma' \left (\sum_{i'\in\mathcal N_{\tilde L}(j)\cup \{j\} } c^L_{ji'}\mathbf x_{i'} \Psi_{\mathcal S^m, t}\right ) \cdot \sum_{i'\in\mathcal N_{\tilde L}(j)\cup \{j\} } c^L_{ji'}\mathbf x_{i'} \right \|_2 \\
\leq & 2\beta' \sqrt{d}, 
\end{split}
\end{equation}
where the last inequality is due to $|\sigma'(x)|\leq 1$.

Combining \eqref{eqn-sgd-case1} and \eqref{eqn-sgd-case2} we have
\begin{equation}
\begin{split}
    & \mathbb E_{SGD}[\|\Delta\Psi_{t+1}\|_2] \\
    \leq & \frac{M-1}{M}  \mathbb E_{SGD} \left [ \left \|(\Psi_{\mathcal S, t} -  \alpha \nabla_{\Psi} \mathcal L(h(\mathbf X_{t}; \Psi_{\mathcal S, t}), Y_{t})) - (\Psi_{\mathcal S^m, t} -  \alpha \nabla_{\Psi} \mathcal L(h(\mathbf X_{t}; \Psi_{\mathcal S^m, t}), Y_{t})) \right \|_2 \right ]\\
    + & \frac{1}{M} \mathbb E_{SGD} \left [ \left \|(\Psi_{\mathcal S, t} -  \alpha \nabla_{\Psi} \mathcal L(h(\mathbf X_{t}; \Psi_{\mathcal S, t}), Y_{t})) - (\Psi_{\mathcal S^m, t} -  \alpha \nabla_{\Psi} \mathcal L(h(\mathbf X_{F'_t}; \Psi_{\mathcal S^m, t}), Y_{F'_t})) \right \|_2 \right ] \\
    \leq & \mathbb E_{SGD}[\|\Delta\Psi_t\|_2] + (1-\frac{1}{M}) \alpha \mathbb E_{SGD} \left [ \left \|\nabla_{\Psi} \mathcal L(h(\mathbf X_{t}; \Psi_{\mathcal S, t}), Y_{t}) - \nabla_{\Psi} \mathcal L(h(\mathbf X_{t}; \Psi_{\mathcal S^m, t}), Y_{t})\right \|_2 \right ] \\
    +  & \frac{1}{M} \alpha \mathbb E_{SGD} \left [ \left  \|\nabla_{\Psi} \mathcal L(h(\mathbf X_{t}; \Psi_{\mathcal S, t}), Y_{t}) - \nabla_{\Psi} \mathcal L(h(\mathbf X'_{t}; \Psi_{\mathcal S^m, t}), Y'_{t}) \right \|_2 \right ] \\
    = & \mathbb E_{SGD}[\|\Delta\Psi_t\|_2] + (1-\frac{1}{M}) \beta'd \mathbb E_{SGD}[\|\Delta\Psi_t\|_2] + \frac{2}{M} \beta' \sqrt{d} \\
    \leq & (1+\beta'd \mathbb E_{SGD}[|\Delta\Psi_t|] + \frac{2}{M} \beta' \sqrt{d}.
\end{split}
\end{equation}
The above inequality yields,
\begin{equation}
    \mathbb E_{SGD}[|\Delta\Psi_T|] \leq \frac{2\beta'\sqrt{d}}{M} \sum_{t=1}^T (1+\beta'd)^{t-1}.
\end{equation}
Plugging the result into \eqref{eqn-bound} we will obtain,
\begin{equation}
    \gamma \leq \frac{2\beta\beta'd}{M} \sum_{t=1}^T (1+\beta'd)^{t-1}.
\end{equation}
We complete the proof for Theorem~\ref{thm-gen-main}.

\section{Dataset Information}\label{appx-dataset}

\begin{table}[h]
\centering
\footnotesize
\caption{Information for experiment datasets. The Github dataset directly provides preprocessed 0-1 features.}
\label{tbl-ucidataset}
\begin{tabular}{@{}cccccccc@{}}
\toprule
Dataset & Domain & \#Instances & \#Raw Feat. & Cardinality  & \#0-1 Feat. & \#Class \\ \midrule
Gene & Life & 3190 & 60 & 4$\sim$6  & 287 & 3 \\
Protein & Life & 1080 & 80 & 2$\sim$8  & 743 & 8 \\
Robot & Computer & 5456 & 24 & 9  & 237 & 4 \\
Drive & Computer & 58509 & 49 & 9  & 378 & 11\\ 
Calls & Life & 7195 & 10 & 4$\sim$10  & 219 & 10 \\
Github & Social & 37700 & - & $\sim$  & 4006 & 2 \\
\hline
Avazu & Ad. & 40,428,967 & 22 & 5$\sim$1611749  & 2,018,025 & 2 \\
Criteo & Ad. & 45,840,617 & 39 & 5$\sim$541311  & 2,647,481 & 2 \\
\bottomrule
\end{tabular}
\end{table}

\subsection{Dataset Information}

We present detailed information for our used datasets concerning the data collection, preprocessing and statistic information. 

\textbf{UCI datasets.} The six datasets are provided by UCI Machine Learning repository \cite{uci-dataset}. They are from different domains, including biology, engineering and social networks. Gene dataset contains 60 DNA sequence elements, and the task is to recognize exon/intron boundaries of DNA. Protein dataset \cite{dataset-protein} consists of the expression levels of 77 proteins/protein modifications, genotype, treatment type and behavior, and the task is to identify subsets of proteins that are discriminant between eight classes of mice. Robot dataset is collected as a robot navigates through a room following a wall with 24 ultrasound sensor readings, and the task is to predict the robot behavior. Drive dataset is extracted from electric current drive signals with 49 attributes, and the task is to identify 11 different classes with different conditions. Calls dataset was created by segmenting audio records belonging to 4 different families, 8 genus, and 10 species, and the task is to identify the class of species. Github dataset \cite{dataset-github} is a large social network of GitHub developers with their location, repositories starred, employer and e-mail address, and the task is to predict whether the GitHub user is a web or a machine learning developer.

The six UCI datasets have diverse statistics. Overall, they contain thousands of instances and dozens of raw features with a mix up of categorial and continuous ones. The categorical raw features have cardinality ranged from 2 to 12. As mentioned in Section~\ref{sec-model}, the cardinality means the number of possible values for a discrete feature. For continuous features in each dataset (if exist), we first normalize the values into 0-mean and 1-standard-deviation distribution and then hash them into 10 buckets with evenly partition between the maximum and the minimum. Then each raw feature can be converted into one-hot representation. After converting all the features into binary ones we get up to hundreds of 0-1 features for each dataset. Table~\ref{tbl-ucidataset} summarizes the basic information for each dataset. 

\textbf{CTR prediction datasets.} The two click-through rate (CTR) prediction datasets have millions of instances and dozens of raw features with diverse cardinality. The goal of CTR prediction task is to estimate the probability that a user will click on an advertisement with the user's profile features and the ad's content features. In specific, Criteo\footnote{http://labs.criteo.com/2014/02/kaggle-display-advertising-challenge-dataset/} is a widely used public benchmark dataset for developing CTR models, which includes 45 million users’ click records, 13 continuous raw features and 26 categorical ones \footnote{In computational advertisement community, the raw features (e.g. site category, device id, device type, app domain, etc.) are often called \emph{fields}. We call them raw features in our paper to keep the notation self-contained.}. We follow \cite{ffm, feature-ctr} and use log transformation to convert the continuous features into discrete ones. Avazu\footnote{https://www.kaggle.com/c/avazu-ctr-prediction} is another publicly accessible dataset for CTR prediction, which contains users’ mobile behaviors including whether a displayed mobile ad is clicked by a user or not. It has 40 millions users' click records, 23 categorical raw feature spanning from user/device features to ad attributes (all are encoded to remove user identity information). The cardinality of different raw features for these two datasets are very diverse, ranging from 5 to a million. The raw features with very large cardinality include some id features, e.g. device id, site id, app id, etc. For each dataset, we convert each raw feature into one-hot representations and obtain 0-1 features. For features appearing less than 4 times we group them as one feature. After preprocessing, we obtain nearly 2 million 0-1 features for Avazu and Criteo as shown in Table~\ref{tbl-ucidataset}. 

\subsection{Dataset Splits}

\textbf{UCI datasets.} For each of UCI datasets, we first randomly partition all the instances into training/validation/test sets according to the ratio of 6:2:2. Then we randomly select a certain ratio ($30\%\sim 80\%$ in our experiments) of features as observed features and use the remaining as unobserved ones. The model is trained with observed features of training instances, validated with observed features of validation instances and tested with all the features of test instances.

\textbf{CTR prediction datasets.} As illustrated in Section~\ref{sec-experiment}, for Avazu/Criteo we split all the instances into ten folds in chronological order. Then we use the first fold for training, second fold for validation, and third to tenth folds for test. In such way, the validation data and test data will naturally contain new features not appeared in training data. Here we provide more illustration about this. As mentioned above, Avazu dataset contains 23 categorial raw features and some of them have very large cardinality. For example, the cardinality of raw features \emph{app id} and \emph{device id} are 5481 and 381763, respectively. In practical systems, there will be new apps introduced and new devices observed by the system as time goes by, and they play as new values out of the known range of existing raw features, which consist of new 0-1 features that are not unseen by the model (as introduced in the beginning of Section~\ref{sec-model}). Since we chronologically divide the dataset into training/validation/test sets, the validation and test sets would both contain a mixture of features seen in training set and new features unseen in training. Concretely, for Avazu, there are totally 618411 features in training set, 248614 new features (unseen by training data) in validation set, and totally 1151000 new features (unseen by both training and validation sets) in all the test sets. For Criteo, there are totally 1340248 features in training set, 472023 new features (unseen by training data) in validation set, and totally 835210 new features (unseen by both training and validation sets) in all the test sets. 

\section{Implementation Details}\label{appx-implement}

We present implementation details for our experiments for reproducibility. We implement our model as well as all the baselines with Python 3.8, Pytorch 1.7 and Pytorch Geometric 1.6.  The experiments are all run on a RTX 2080Ti, except for our scalability test in Section 4.3 where we use a RTX 8000.

\subsection{Details for UCI experiments}

\textbf{Architectures.}
For experiments on UCI datasets, the network architecture for our backbone network is
\begin{itemize}
    \item A three-layer neural network with hidden size 8 in each layer.
    \item The activation function is ReLU.
    \item The output layer is a softmax function for multi-class classification or sigmoid for two-class classification.
\end{itemize}
The architecture for our GNN network is
\begin{itemize}
    \item A four-layer GCN \cite{gcn} network with hidden size 8 in each layer.
    \item Adding self-loop and using normalization for graph convolution in each layer.
    \item No activation unit is used.
\end{itemize}

\textbf{Training Details.} We adopt self-supervised learning approach with n-fold splitting. Concretely, in each epoch, we feed the whole training data matrix into the model and randomly divide all the observed features into $n=5$ disjoint sets $\{\overline F_s\}_{s=1}^n$. Then a nested optimization is considered: 1) we update the backbone network with $n$ steps where in the $s$-th step, we mask observed features in $\overline F_s$; 2) then we update the GNN network with one step using the accumulated loss of the $n$ steps. The training procedure will repeat the above process until a given budget of 200 epochs. Also, in each epoch, the validation loss is averaged over $n$-fold data where for the $s$-th fold the features in $\overline F_s$ are masked and the model will use the remaining observed features for prediction. Finally, we report the test accuracy achieved by the epoch that gives the minimum logloss on validation dataset.

\textbf{Hyperparameters.}
Other hyper-parameters are searched with grid search on validation dataset. We use the same hyperparameter settings for six datasets, which indicates that our model is dataset agnostic in some senses. The settings and searching space are as follows:
\begin{itemize}
    \item The learning rates $\alpha_f$, $\alpha_s$ are searched within $[0.1, 0.01, 0.001]$. We set $\alpha_f=0.01$ and $\alpha_s=0.001$.
    \item The ratio for DropEdge $\rho$ is searched within $[0.0, 0.2, 0.5]$. We set $\rho=0.5$.
    \item The fold number for data partition $n$ is searched within $[2, 5, 10]$. We set $n=5$.
\end{itemize}

\textbf{Baselines.} All the baselines are implemented with a three-layer neural network, the same as the backbone network in our model. The baselines are all trained with a given budget of 200 epochs, and we report the test accuracy achieved by the epoch that gives the minimum logloss on validation dataset. The difference of them lies in the ways for leveraging observed and unobserved (new) features in training and inference. The detailed information for baseline methods is as follows.
\begin{itemize}
    \item \emph{Base-NN.} Use observed features of training instances for model training, and observed features of validation/test instances for model validation/test.
    \item \emph{Oracle-NN.} User all the features of training instances for model training, and all the features of validation/test instances for model validation/test.
    \item \emph{INL-NN.} The training process contains two stages. In the first stage, we train the model with initialized parameters using observed features of training instances for 200 epochs and save the model at the epoch that gives the minimum logloss on validation dataset (with observed features). In the second stage, we load the saved model in the first stage, train it using unobserved features of training instances for 200 epochs and report the test accuracy (using all the features) achieved by the epoch that gives the minimum logloss on validation dataset (using all the features).
    \item \emph{Average-NN.} Use observed features of training instances for model training. In test stage, we average the embeddings of observed features as the embeddings of unobserved features. Then the model would use all the features of test instances for inference (by using the trained embeddings of observed features and estimated embeddings of unobserved ones). 
    \item \emph{Pooling-NN.} Use observed features of training instances for model training. In test stage, we replace the GNN model in FATE with mean pooling over neighbored nodes. Specifically, the embeddings of unobserved features are obtained by non-parametric message passing using mean pooling over the feature-data bipartite graph.
    \item \emph{KNN-NN.} Use observed features of training instances for model training. In test stage, we compute the Jaccard similarity scores between any pair of observed and unobserved features. Then for each unobserved feature, its embedding is obtained by taking average of the embeddings of the observed features with top 20\% Jaccard similarities as the target unobserved feature.
\end{itemize}

\subsection{Details for Avazu/Criteo experiments}

\textbf{Architectures.}
For experiments on Criteo and Avazu datasets, we consider two specifications for the backbone network. First, we specify it as a feedforward NN, whose architecture is
\begin{itemize}
    \item A three-layer neural network with hidden size 10-400-400-1.
    \item The activation function is ReLU unit except the last layer using sigmoid.
    \item We use BatchNorm and Dropout with probability 0.5 in each layer.
\end{itemize}
Second, we specify it as DeepFM network \cite{deepfm}, which also contains an embedding layer and a subsequent classification layer. The embedding layer is an embedding lookup $\mathbf W$ which maps each nonzero index in $\mathbf x_i$ to an embedding, denoted as $\{\mathbf z_i^m\}$ where $\mathbf z_i^m$ denotes the embedding for the $m$-th raw feature of instance $i$. The subsequent classification layer can be denoted by
\begin{equation}
    \hat y_i = \sum_{j=1}^D w_j\cdot x_{ij} + \mbox{FM}(\{\mathbf z_i^m\}) + \mbox{FNN}(\mathbf z_i),
\end{equation}
where $\mathbf z_i = \sum_{m=1}^d \mathbf z_i^m$, FNN is a feedforward neural network and FM is a factorization machine which can be denoted as
\begin{equation}
    \mbox{FM} (\{\mathbf z_i^m\}) = \sum_{m, m'} <\mathbf z_i^m, \mathbf z_i^{m'}>.
\end{equation}
For our model FATE-DeepFM, we use the GNN model to compute feature embeddings $\hat {\mathbf W}$ based on which we use the input feature vector of an instance $\mathbf x_i$ to obtain $\{\mathbf z_i^m\}$ and $\mathbf z_i$ and then plug into the subsequent classification layer.

The architecture for our GNN network is
\begin{itemize}
    \item A two-layer GraphSAGE \cite{graphsage} network with hidden size 10 in each layer.
    \item No activation unit.
\end{itemize}

\textbf{Training Details.}
We adopt inductive learning approach with k-shot sampling for training our model. Furthermore, we use instance-level mini-batch partition to control space cost. Concretely, in each epoch, we first randomly shuffle all the training instances and partition them into mini-batches with size $B$. Then for each mini-batch, we consider a training iteration where the backbone network is updated with $n$ steps and the GNN model is updated with one step. For $s$-th step update for the backbone, we uniformly sample $k$ raw features from $d$ existing ones, obtain a new feature set $F_s$ induced by the sampled $k$ raw features, and extract the corresponding columns in the data matrix to form a proxy data, i.e., a $B\times |F_s|$ sub-matrix from $\mathbf X_{tr}$. We then use the proxy data matrix to update the backbone. After $n$-step updates for the backbone, we update the GNN model with the accumulated loss of the $n$ steps. 

All the models including FATE and baselines are trained with a given budget of 100 epochs. For every 10 training iteration, we compute the validation loss and ROC-AUC on validation dataset.
Finally, we report the test ROC-AUC achieved by the epoch that gives the highest ROC-AUC on validation dataset. 

\textbf{Hyperparameters.}
Other hyper-parameters are searched with grid search on validation dataset. The settings and searching space are as follows:
\begin{itemize}
    \item The learning rates $\alpha_f$, $\alpha_s$ are searched within $[0.1, 0.01, 0.001, 0.0001, 0.00001]$. We set $\alpha_f=0.0001$ and $\alpha_s=0.0001$ for NN as backbone. For DeepFM as backbone, we set $\alpha_f=0.0001$ and $\alpha_s=0.0001$ on Criteo, and $\alpha_s=0.00001$ on Avazu.
    \item The ratio for DropEdge $\rho$ is searched within $[0.0, 0.2, 0.5]$. We set $\rho=0.5$.
    \item The batch size $B$ is searched within $[10000, 20000, 100000, 200000]$. We set $B=100000$.
    \item The sampling size $k$ for data partition is searched within $[7, 11, 13, 17, 20]$ for Avazu and $[13, 16, 21, 24, 27]$ for Criteo. We set $k=17$ for Avazu and $k=24$ for Criteo.
\end{itemize}

\section{More Experiment Results}\label{appx-result}

We supplement more experiment results as further discussions of our method, including salability tests and ablation studies.

\subsection{Scalability Test}

We conduct experiments for scalability test on Criteo dataset. The scalability experiment is deployed on a RTX 8000 GPU with 48GB memory (though our comparison experiments in Section 4.1 and 4.2 require less than 12GB memory for each trial).

\textbf{Impact of Batch Sizes.} We statistic the running time per mini-batch for training and inference on Criteo dataset in Fig.~\ref{fig-scale-B}(a) and (b) where the batch size is changed from 1e5 to 1e6. The results are taken average over 20 mini-batches. As we can see, as the batch size increases, the training time and inference time both increase linearly, which depicts that our model has linear scalability w.r.t. the number of instances for each update and inference. Also, in Fig.~\ref{fig-scale-B}(c) and (d), we present the GPU memory cost for training and inference on Criteo dataset. As we can see, the space cost of FATE also increases linearly with respect with batch sizes. Indeed, as discussed in Section 3.2, the time and space complexity of FATE is $O(Bd)$ using mini-batch training where $d$ is relative small value (up to a hundred). Hence, the empirical results verify our analysis.

\textbf{Impact of Feature Numbers.} We also discuss the model's scalability concerning different feature numbers, i.e. the dimension of feature vectors $D$ for training data ($D'$ for test data). There are totally 39 raw features in Criteo dataset and we only use $[39, 36, 33, 30, 27, 24, 21, 18, 15]$ of them for experiments, which induces $[2647481, 2475154, 1956637, 1949494, 1466711, 927535, 862927, 850016]$ features, and also compare the training/inference time per mini-batch and GPU memory costs. The results are shown in Fig.~\ref{fig-scale-D}(a)-(d). We can see that as the feature number increases, the time and space costs both go up in linear trends, which indicates FATE has linear scalability w.r.t. feature number $D$. In fact, more feature numbers would require larger model size (for feature embeddings) and induce larger computational graph due to the increase of $d$; also, the increase of $d$ would also require more training/inference time based on our complexity analysis.

\subsection{Ablation Studies}

We next conduct ablation studies for some key components in our framework and discuss their impacts on our model. The results are shown in Table~\ref{tbl-abl-uci} and Table~\ref{tbl-abl-large}.

\textbf{Effectiveness of DropEdge.} In Table~\ref{tbl-abl-uci}, we compare with not using DropEdge regularization in training stage. The results show that FATE consistently achieve superior accuracy throughout six datasets, which demonstrate the effectiveness of DropEdge regularization that can help to alleviate the over-fitting on training features.

\textbf{Effectiveness of Asynchronous Updates.} We also compare our asynchronous updates (alternative fast updates for Backbone network and slow updates for GNN) with directly using end-to-end jointly training of two networks. The results show that FATE with asynchronous updates can outperform joint training approach over a large margin, which verify the effectiveness of our proposed asynchronous updating rule. The reason is that using asynchronous updates can decouple the training for two networks and further help two models learn useful information from observed data. Also, we observe that using slow updates for GNN network with the accumulated loss of several data splits can stabilize its training and alleviate the over-fitting.

\textbf{Comparison between Training Approaches.} We further investigate the k-fold splitting and n-fold sampling strategies used in our training approaches in Table~\ref{tbl-abl-uci}. Recall that in UCI datasets, we adopt the self-supervised learning approach for training. Here we compare our used n-fold splitting with leave-one-out, which leave out partial features as a fixed set for masking in training, and k-shot sampling, which randomly sample $\lfloor0.8*D\rfloor$ training features as observed ones and mask the remaining for each update. The results show that the n-fold splitting and k-shot sampling strategies both provide superior performance in six datasets. Furthermore, when using different $n$'s, the relative performance of n-fold splitting and k-shot sampling approaches diverge in different cases. Overall, we found using n-fold splitting with $n=5$ or $n=10$ work the best on average. In fact, the n-fold splitting and k-shot sampling both play as a role in mimicking new features and exposing partial observed features to the model in training. The difference is that n-fold splitting guarantees that in each iteration the model can be updated on each feature in training set while the k-shot sampling introduces more randomness. Unlike UCI datasets, in two large-scale datasets Criteo and Avazu where we adopt the inductive learning approach
for training, we found using k-shot sampling works consistently better than n-fold splitting. One possible reason is that k-shot sampling can increase the diversity of proxy data (containing partial features and partial instances) used for each training update and can presumably help the model to overcome feature-level over-fitting in large datasets. Such results are consistent with our theoretical generalization error analysis in Section~\ref{sec-gen}.

\textbf{Impact of Sampling Sizes.} We next study the impact of sampling size $k$ on the model performance. We use different $k$'s for inductive learning on Avazu and Criteo. The results are shown in Table~\ref{tbl-abl-large}. As we can see, as $k$ increases, the training AUC goes up, which demonstrates that larger sampling size can help for optimization since it reduces the variance of sampling and enhances training stability. Furthermore, it is not always beneficial to increase $k$. When it becomes large enough and close to the number of raw features $d$, the model would suffers from over-fitting. The results further demonstrate that large sampling size would lead to feature-level over-fitting, which echoes our theoretical results in Section~\ref{sec-gen}. Recall that Theorem~\ref{thm-gen-main} shows that model's generalization gap depends on the randomness in sampling over training features. Here when $k$ is large, there will be less randomness from feature-level data partition, which will degrade model's generalization ability. 



\clearpage
\begin{table}[t]
\centering
\caption{Ablation studies for DropEdge regularization, asynchronous updates for two networks (compared with end-to-end joint training) and our sampling strategies (n-fold splits and k-shot sampling compared with leave-one-out) on six UCI datasets. We run each experiment five times with different random seeds and report the mean scores.}
\label{tbl-abl-uci}
\begin{tabular}{@{}ccccccc@{}}
\toprule
Models & Gene & Protein & Robot & Drive & Calls & Github  \\ \midrule
w/o DropEdge & 0.9226 & 0.9031 & 0.8062 & 0.5261 & 0.9760 &0.8688 \\  
End-to-end Joint & 0.9257 & 0.8963 & 0.8454 & 0.1073 & 0.9762 & 0.7557 \\
\textbf{FATE (ours)} & \textbf{0.9345} & \textbf{0.9178} & \textbf{0.8815} & \textbf{0.6440} & \textbf{0.9839} & \textbf{0.8743} \\
\hline
Leave-one-out & 0.8564 & 0.6574 &  0.7641 & 0.4448 & 0.9334 & 0.8533\\
n-fold split ($n=10$) & 0.8884 & 0.8426 &  \textbf{0.8888} & 0.5910 & \textbf{0.9851} & 0.8723\\
\textbf{n-fold split ($n=5$)} & 0.9345 & \textbf{0.9178} & 0.8815 & \textbf{0.6440} & 0.9839 & 0.8743\\
n-fold split ($n=2$)  & 0.9298 & 0.8398 & 0.8359 & 0.5234  & 0.9514 & \textbf{0.8771}\\
k-shot sample ($n=10$) & \textbf{0.9404} &  0.9046 & 0.8839 & 0.5559 & 0.9812 & 0.8712\\
k-shot sample ($n=5$) & 0.9379 & 0.9102  & 0.8802 & 0.6060 & 0.9819 & 0.8712\\
k-shot sample ($n=1$) & 0.9304 &  0.8778 & 0.8568 & 0.5408 & 0.9611 & 0.8722\\
\bottomrule
\end{tabular}
\end{table}

\begin{table}[t]
\caption{Ablation studies for different sampling sizes $k$ for k-shot sampling in inductive learning on Avazu and Criteo. We report ROC-AUC on training data, validation data and 8-fold test data (T1-T8).}\vspace{-5pt}
\label{tbl-abl-large}
\scalebox{0.9}{
\begin{tabular}{@{}c|c|cccccccccc@{}}
\toprule
Dataset & $k$ & Train & Val & T1 & T2 & T3 & T4 & T5 & T6 & T7 & T8 \\ 
\midrule
\multirow{4}{*}{Avazu} & 11 & 0.7815 & 0.7369 & 0.6853 & 0.6950   & 0.7058 & 0.7093 & 0.7137 & 0.7186 & 0.7183 & 0.7193  \\
& 14 & 0.7842 & 0.7399 & 0.6896 & 0.6989 & 0.7080  & 0.7091 & 0.7142 & 0.7190 & 0.7201 & 0.7210    \\
& 17 & 0.7902 & 0.7433  & 0.6894 & 0.6995 & 0.7082 & 0.7105 & 0.7156 & 0.7203 & 0.7215 & 0.7216   \\
& 20 & 0.7978 & 0.7420 & 0.6872 & 0.6978 & 0.7080 & 0.7091  & 0.7146 & 0.7201 & 0.7201 & 0.7202  \\

\midrule
\multirow{5}{*}{Criteo} & 16 & 0.7955 & 0.7725 & 0.7669 & 0.7666 & 0.7688 & 0.7695 & 0.7714 & 0.7721 & 0.7722 & 0.7722                      \\
& 21 & 0.7988 & 0.7752 & 0.7699 & 0.7695 & 0.7714 & 0.7721 & 0.7736 & 0.7739 & 0.7741 & 0.7744                   \\
& 24 & 0.8005 & 0.7758 & 0.7701 & 0.7694 & 0.7712 & 0.7727  & 0.7732 & 0.7745 & 0.7740 & 0.7743                     \\
& 27 & 0.8025 & 0.7747 & 0.7698 & 0.7683 & 0.7711 & 0.7713 & 0.7727 & 0.7743 & 0.7734 & 0.7744                       \\
& 30 & 0.8057 & 0.7750 & 0.7690  & 0.7678 & 0.7701 & 0.7708 & 0.7725 & 0.7735 & 0.7723 & 0.7739 \\

\bottomrule
\end{tabular}}
\end{table}

\clearpage
\setcounter{figure}{5}  
\begin{figure}[t]
\begin{minipage}{0.9\linewidth}
\centering
\subfigure[Training time per mini-batch]{
\begin{minipage}[t]{0.48\linewidth}
\centering
\label{fig-ratio}
\includegraphics[width=0.98\textwidth,angle=0]{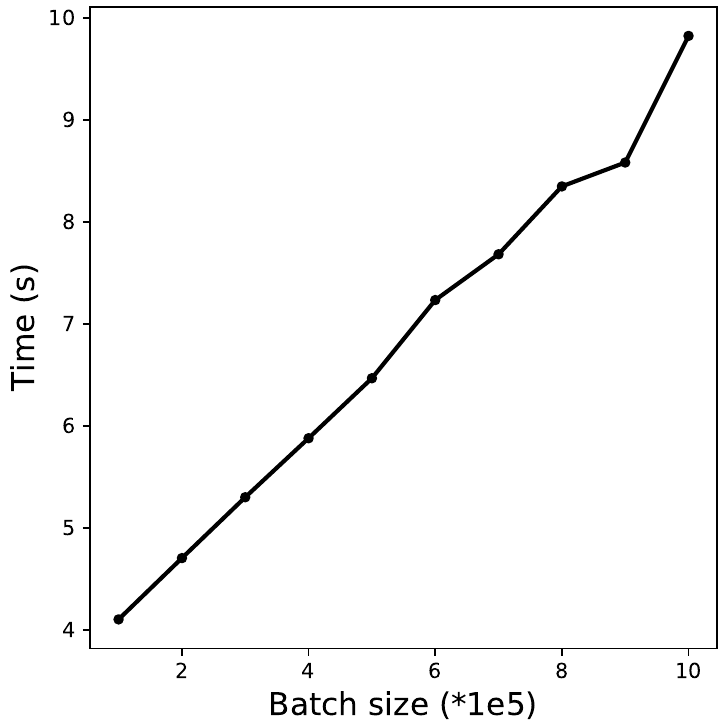}
\end{minipage}%
}%
\subfigure[Inference time per mini-batch]{
\begin{minipage}[t]{0.48\linewidth}
\centering
\label{fig-sparse}
\includegraphics[width=0.98\textwidth,angle=0]{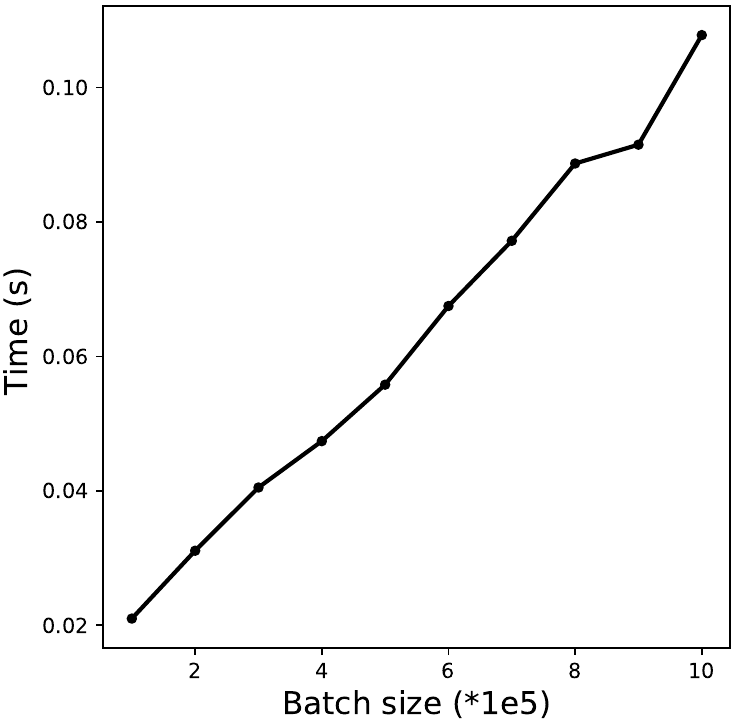}
\end{minipage}%
}%

\subfigure[GPU memory cost for training]{
\begin{minipage}[t]{0.48\linewidth}
\centering
\label{fig-attn-a}
\includegraphics[width=0.98\textwidth,angle=0]{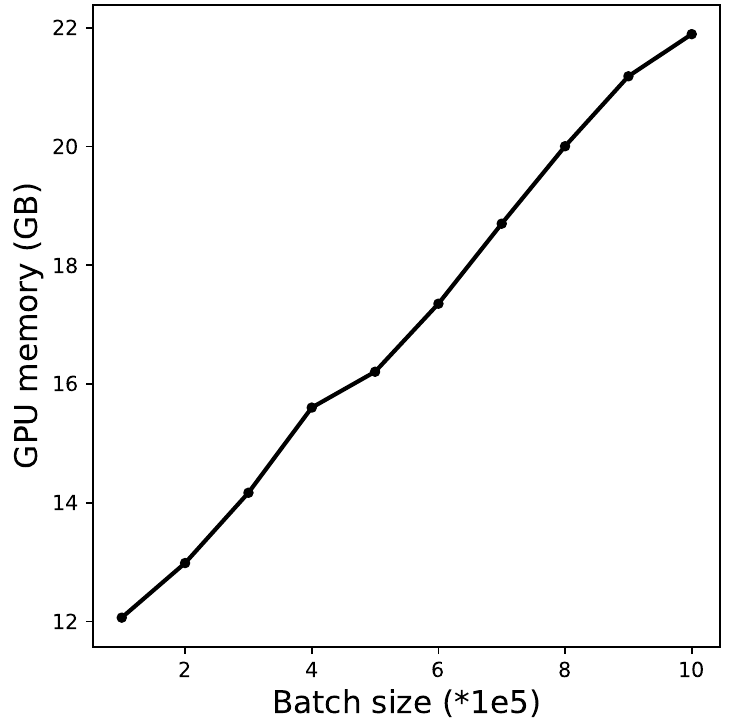}
\end{minipage}%
}
\subfigure[GPU memory cost for inference]{
\begin{minipage}[t]{0.48\linewidth}
\centering
\label{fig-attn-a}
\includegraphics[width=0.98\textwidth,angle=0]{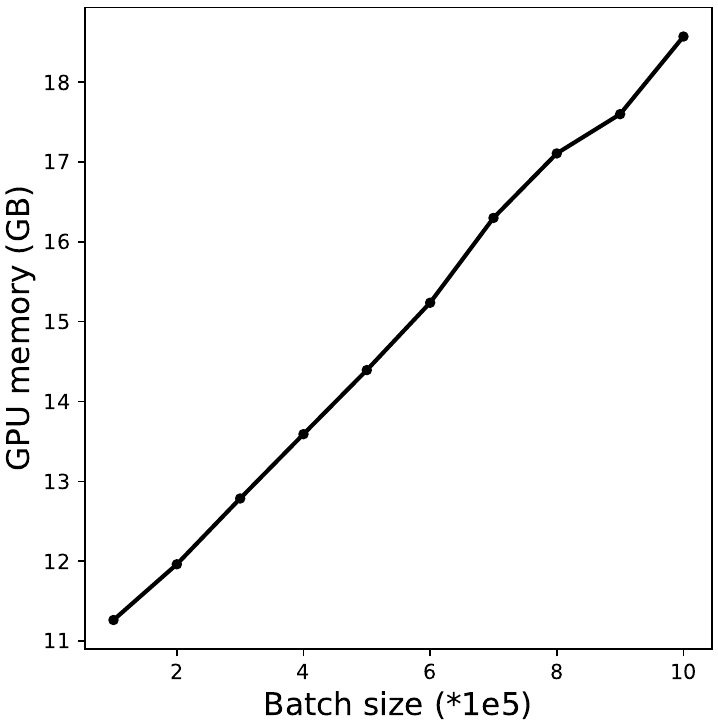}
\end{minipage}%
}
\end{minipage}
\caption{Scalability test of time and space costs w.r.t. batch sizes $B$ for training and inference on Criteo dataset. \label{fig-scale-B}}
\end{figure}

\newpage
\begin{figure}[t]
\begin{minipage}{0.9\linewidth}
\centering
\subfigure[Training time per mini-batch]{
\begin{minipage}[t]{0.48\linewidth}
\centering
\label{fig-ratio}
\includegraphics[width=0.98\textwidth,angle=0]{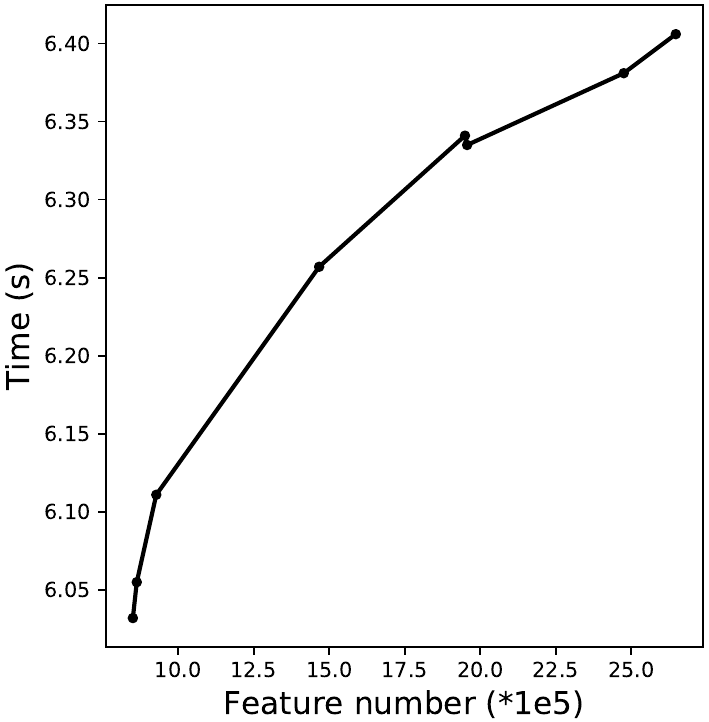}
\end{minipage}%
}%
\subfigure[Inference time per mini-batch]{
\begin{minipage}[t]{0.48\linewidth}
\centering
\label{fig-sparse}
\includegraphics[width=0.98\textwidth,angle=0]{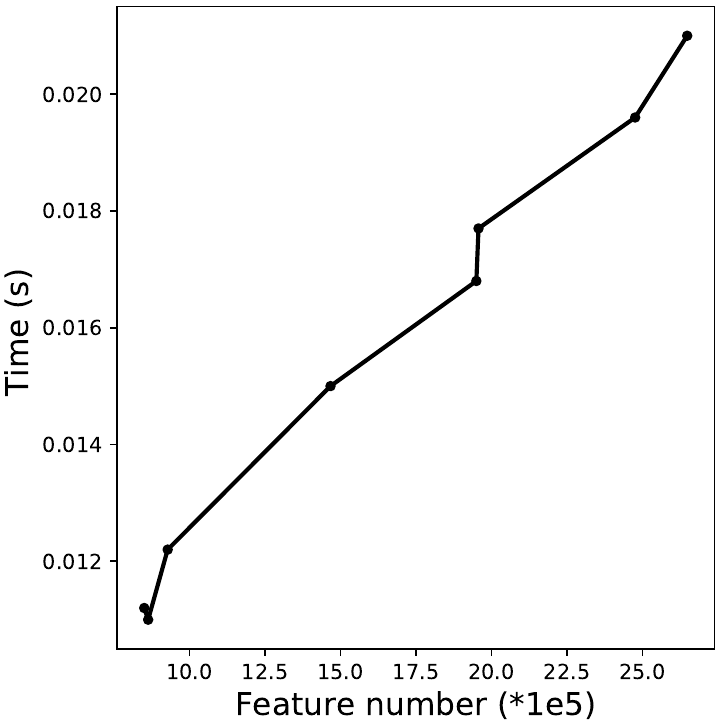}
\end{minipage}%
}%

\subfigure[GPU memory cost for training]{
\begin{minipage}[t]{0.48\linewidth}
\centering
\label{fig-attn-a}
\includegraphics[width=0.98\textwidth,angle=0]{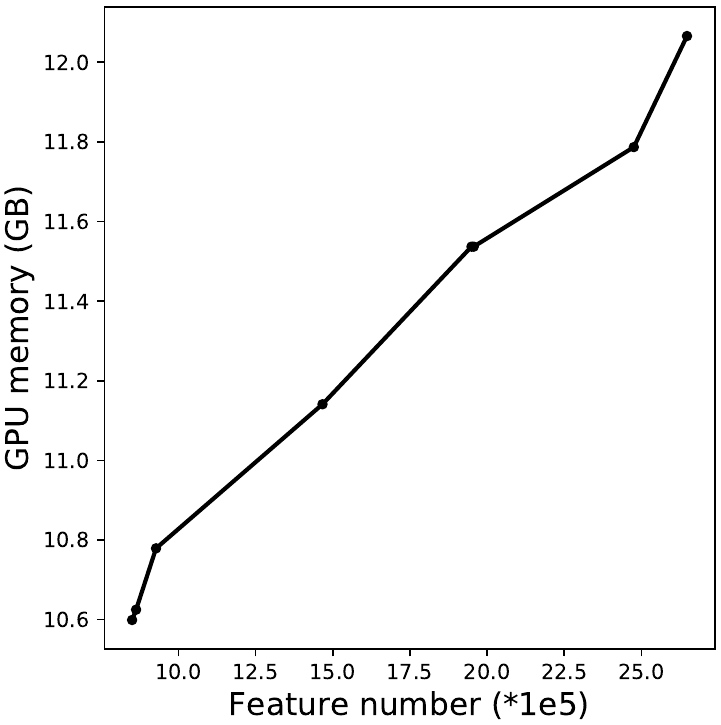}
\end{minipage}%
}
\subfigure[GPU memory cost for inference]{
\begin{minipage}[t]{0.48\linewidth}
\centering
\label{fig-attn-a}
\includegraphics[width=0.98\textwidth,angle=0]{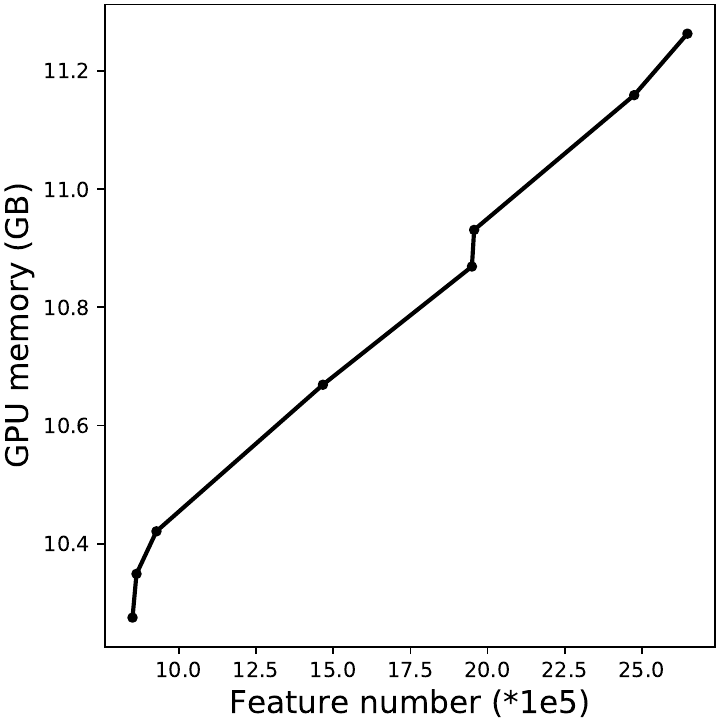}
\end{minipage}%
}
\end{minipage}
\caption{Scalability test of time and space costs w.r.t. feature numbers $D$ for training and inference on Criteo dataset. \label{fig-scale-D}}
\end{figure}

\end{document}